\documentclass[review]{elsarticle}


\journal{Elsevier}









\bibliographystyle{elsarticle-num}

\usepackage{graphicx}
\usepackage{caption}
\usepackage{subcaption}
\usepackage[ruled]{algorithm2e}
\usepackage{bm}
\usepackage{amsfonts}
\usepackage{epstopdf}
\usepackage{pgfplots}
\usepackage{amsmath}
\usepackage{mathtools}
\usepackage{amsthm}
\usepackage{acronym}

\theoremstyle{plain}
\newtheorem{thm}{Theorem} 

\newtheorem{assumption}{Assumption}

\newcommand\norm[1]{\left\lVert#1\right\rVert}

\def\s{{\mathbf s}}

\def\u{{\mathbf u}}
\def\x{{\mathbf x}}
\def\y{{\mathbf y}}
\def\z{{\mathbf z}}
\def\X{{X}}
\def\Y{{Y}}
\def\Z{{Z}}
\def\muu{\boldsymbol{\mu}}
\def\C{{\mathbf C}}
\def\J{{\mathbf J}}

\def\K{{\mathbf K}}
\def\W{{\mathbf W}}
\def\Q{{\mathbf Q}}
\def\phii{{\boldsymbol \phi}}

\newlength
\figureheight
\newlength
\figurewidth

\begin{document}

\begin{frontmatter}

\title{Multimodal Latent Variable Analysis}


\author[mymainaddress]{Vardan Papyan\corref{mycorrespondingauthor}}
\cortext[mycorrespondingauthor]{Corresponding author.}
\ead{vardanp@campus.technion.ac.il}

\author[mysecondaryaddress]{Ronen Talmon}

\address[mymainaddress]{Department of Computer Science, Technion -- Israel Institute of Technology, Israel}
\address[mysecondaryaddress]{Department of Electrical Engineering, Technion -- Israel Institute of Technology, Israel}

\begin{abstract}
Consider a set of multiple, multimodal sensors capturing a complex system or a physical phenomenon of interest. Our primary goal is to distinguish the underlying sources of variability manifested in the measured data. The first step in our analysis is to find the common source of variability present in all sensor measurements. We base our work on a recent paper, which  tackles this problem with alternating diffusion (AD). In this work, we suggest to further the analysis by extracting the sensor-specific variables in addition to the common source. We propose an algorithm, which we analyze theoretically, and then demonstrate on three different applications: a synthetic example, a toy problem, and the task of fetal ECG extraction.
\end{abstract}

\begin{keyword}
Manifold Learning, Diffusion Maps, Sensor Fusion, Alternating Diffusion, Fetal ECG
\end{keyword}

\end{frontmatter}


\section{Introduction}
The analysis of a physical phenomenon or some complex system at hand can often be made easier through the use of several sensors instead of a single complex one. The hope is that each of the sensors captures a different part of the convoluted system, while the fusion of all the information captures the global picture. This line of thinking has led to the abundance of multimodal and multi-sensory data in recent years and to an increased demand for algorithms that enable its processing and analysis \cite{Lahat}. A prime example for the above is medical diagnosis based on collected bedside data, where one monitors a patient using various basic sensors, such as heart rate, pulse, blood pressure and oxigen level just to name a few, and attempts to diagnose the complex system at hand, that is the patient state, using the collected data.

Elaborate systems, such as the one mentioned above, are usually governed by many sources of variability. A central problem is then the analysis of latent sources, given measurements originating from several sensors of various types. Naturally, analyzing the measured data in terms of its underlying sources of variability requires their extraction. Unfortunately, driving sources are often hidden in nonlinear unknown manners, thereby posing a true challenge to the analysis and to the extraction.

In order to facilitate the extraction of the different sources of variability, we divide them into two conceptual categories: (i) sources of variability common to all sensors; and (ii) variables unique to a specific sensor. In our work, we focus on a two step implementation where we first reveal the common variable. Once it is found, we extract the remaining sources of variability, i.e, the sensor-specific ones. Intuitively, our approach marginalizes the common variable, which is found in the first step, and then continues to extract the sources of variability left in the filtered data. This simplifies our task, since we do not attempt to extract all the sources manifested in the data at once.

In this paper, we use an unsupervised manifold learning approach to address the problem. Various manifold learning algorithms were proposed in the literature over the years, e.g., \cite{belkin2001laplacian,donoho2003hessian,roweis2000nonlinear,tenenbaum2000global}. However, most of these classical methods assume that the data is captured by a single sensor, rather than in the multimodal multi-sensory setting we consider here. We focus on a particular paradigm -- the Diffusion Geometry, as presented in \cite{coifman2006diffusion,coifman2005geometric}. Using this framework, the \ac{AD} algorithm was recently proposed in \cite{lederman2015learning,talmon2016latent} for the purpose of extracting the source of variability common to multiple sensors. 
\ac{AD} follows a recent line of papers that propose to use multiplications and manipulations of kernels for the purpose of fusing data from different sensors, e.g., \cite{de_sa_spectral_2005,de_sa_multi_view_2010,boots_two_manifold_2012,lindenbaum2015multiview}. Similarly to recently presented nonlinear methods, e.g., \cite{michaeli2015nonparametric,yair2016local}, \ac{AD} is shown to reveal only the common components among all processed sensors. 
Successful applications of \ac{AD} to real measured data were demonstrated, e.g., in \cite{lederman2015alternating} for the task of sleep stage identification. Herein, we rely on \ac{AD} and aim to extend it by further analyzing the measurements and finding the sensor-specific variables. Our main motivation is that in some applications the sensor specific variables are far more important than the common variable. Indeed, we show one real-life example of such an application -- fetal \ac{ECG} extraction.

Our main contribution in this work is a novel algorithm, attempting to recover all the sources of variability manifested in a set of multi-sensory multimodal measurements. We justify our proposed scheme theoretically, showing that it is guaranteed to find the underlying parametrizations under certain prescribed conditions. In addition, we demonstrate its applicability in three different applications: a synthetic example, a toy problem and a real-life application.

This paper is organized as follows. In Section \ref{Sec:ProblemFormulation} we introduce formally the problem we address, and in Section \ref{Sec:Preliminaries} we review the diffusion maps and \ac{AD} algorithms. In Section \ref{Sec:ProposedMethod} we present the proposed method and in Section \ref{Sec:TheoreticalAnalysis} we analyze it theoretically. In Section \ref{Sec:Experiments} we test our method on a synthetic example, a toy problem and a real-life application -- the extraction of fetal \ac{ECG}. We conclude this paper in Section \ref{Sec:Conclusions}.

\section{Problem Formulation} \label{Sec:ProblemFormulation}

Consider three latent random variables $\X$, $\Y$ and $\Z$ in $\mathbb{R}^{d_x}$, $\mathbb{R}^{d_y}$ and $\mathbb{R}^{d_z}$, respectively, which are jointly distributed according to some \ac{PDF} denoted by $P(\X,\Y,\Z)$. Following the work in \cite{lederman2015learning}, we assume that the variables $\Y$ and $\Z$ are independent given $\X$, i.e., the joint \ac{PDF} can be written as follows:
\begin{equation}
P(\X,\Y,\Z) = P(\Y|\X)P(\Z|\X)P(\X),
\label{eq:pdf_fac}
\end{equation}
where $P(\X)$ is the marginal \ac{PDF} of $\X$, and $P(\Y|\X)$ and $P(\Z|\X)$ are the conditional \acp{PDF} of $\Y$ and $\Z$ given $\X$, respectively.
When measuring a system of interest, a measurement instance is defined by the triplet $(\x_i,\y_i,\z_i)$, which is a realization sampled from $P(\X,\Y,\Z)$. We do not have access to the latent variables; instead, we have two sensors observing the system at hand through two unknown observation functions given by $g(\x_i,\y_i)$ and $h(\x_i,\z_i)$. We assume $g$ and $h$ are smooth and locally invertible bilipschitz functions.
Let $\{\s_i^{(1)}\}_{i=1}^N$ and $\{\s_i^{(2)}\}_{i=1}^N$ denote two sets of $N$ measurement samples, taken simultaneously from the two sensors, such that $\s_i^{(1)}=g(\x_i,\y_i)\in\mathbb{R}^{d_1}$ and $\s_i^{(2)}=h(\x_i,\z_i)\in\mathbb{R}^{d_2}$, where $\{ (\x_i,\y_i,\z_i) \}_{i=1}^N$ are $N$ realizations of the system's hidden variables. 
In other words, we have hidden realizations $(\x_i,\y_i,\z_i)$ of three underlying variables and two sensor observations $\s_i^{(1)}$ and $\s_i^{(2)}$; $\x_i$ is the common latent variable between the two observations, whereas $\y_i$ and $\z_i$ are two sensor-specific variables.

Given the two sets of measurement samples, the work in \cite{lederman2015learning} showed that a method based on \ac{AD} operators extracts a parameterization of the common variable $X$. In this work, we aim to further the analysis and extract a parametrization of the variables $\Y$ and $\Z$ as well. Such a complementing capability enables us to fully parametrize all the hidden variables underlying the measurements of the system of interest.

Although the analysis and methods used in this paper will be carried out from a different standpoint, the factorization in \eqref{eq:pdf_fac} can be used to explain the main concept. Intuitively, the extraction of the common variable $\X$ in \cite{lederman2015learning} can be viewed as a marginalization operator applied to the joint probability $P(\X,\Y,\Z)$ obtaining $P(\X)$. In this work, we devise another operator which uses $P(\X)$ to construct the conditional probabilities $P(\Y|\X)$ and $P(\Z|\X)$. Then, given $P(\Y|\X)$ and $P(\Z|\X)$, it marginalizes the variable $\X$ and obtains a parametrization of the sensor-specific variables $\Y$ and $\Z$.

\section{Preliminaries}
\label{Sec:Preliminaries}
\subsection{Diffusion Maps}
\label{subsec:DiffusionMaps}

Diffusion maps \cite{coifman2006diffusion,coifman2005geometric} is a data-driven nonlinear dimensionality reduction algorithm. Given a set of $N$ measurements $\{\u_i\}_{i=1}^N$ in $\mathbb{R}^m$, the method commences by constructing an affinity matrix $\W$ of size $N \times N$, whose $(i,j)$-th entry is given by
\begin{equation}
W_{i,j} = \exp\left( -\frac{\norm{\u_i - \u_j}^2}{\epsilon} \right), \ \forall i,j=1,\dots,N.
\end{equation}
Intuitively, $\W$ can be interpreted as a weight matrix of a graph with $N$ vertices, where the coefficient $\epsilon>0$ dictates the sparsity of the edges. If $\epsilon$ is small, most edges have a negligible, close to zero weight and the graph is effectively sparse, whereas if $\epsilon$ is large, most edges are assigned with non negligible weights and the graph is dense.

The constant $\epsilon$ is usually chosen according to the data at hand, and in this work we set it using the method suggested in \cite{lederman2015learning}. Therein, the constant was chosen to be $\epsilon = \sqrt{\epsilon_i \epsilon_j}$, where $\epsilon_i$ a scaling constant corresponding to the $i$-th vertex. In particular, $\epsilon_i$ is chosen to be the mean squared distance from the $i$-th vertex to its $k$ nearest neighbors.

The next step is to normalize the affinity matrix $\W$, which results in the matrix $\K$. Various normalization procedures have been suggested in the literature \cite{nadler2006diffusion,lafon2004diffusion}, each having a different interpretation when analyzed theoretically. In this work, $\K$ is constructed by dividing each column of $\W$ by its sum, yielding a column-stochastic matrix. As a result, $\K$ can be viewed as a transition probability matrix of a Markov chain on the graph.

Once the affinity matrix is constructed and normalized, a $d$-dimensional embedding $\{ \hat{\u}_i \}_{i=1}^N$ is formed according to the following nonlinear map:
\begin{equation} \label{eq:parametrization}f
\hat{\u}_i = \left[ \lambda_1^m \phi_1^i, \dots, \lambda_d^m \phi_d^i \right]^T,
\end{equation}
where $\phii_j$ is the $j$-th left eigenvector of the matrix $\K$ and $\phi_j^i$ is its $i$-th entry, $\lambda_j^m$ is the $j$-th eigenvalue (when the eigenvalues are denoted in descending order) raised to the power of $m$, and $m>0$ is a constant. 
Typically, $d$ is set to be much smaller than $\min(d_1,d_2)$, thereby attaining dimensionality reduction. 
In addition to providing compact representation, this nonlinear map attempts to reveal the essence of the data in few dimensions, accurately representing their underlying intrinsic variables. 
In the context of diffusion maps, special attention is given to the Euclidean distance between the embedded samples $\hat{\u}_i$.
Specifically, the Euclidean distance between the embedded samples $\hat{\u}_i$ approximates the Euclidean distance between the corresponding columns of $\K^m$. This distance is termed the {\em diffusion distance}, since it takes into account transition probabilities on the constructed graph consisting of $m$ Markov chain steps.
We note that diffusion distance plays a large role in the algorithm presented in this paper.
For more details, as well as the motivation behind this particular dimensionality reduction method, we refers the reader to \cite{coifman2006diffusion}.

\subsection{Alternating Diffusion}

Given two sets of measurement samples originating from two sensors, i.e., $\{\s_i^{(1)}\}_{i=1}^N$ and $\{\s_i^{(2)}\}_{i=1}^N$, the first step in the \ac{AD} algorithm is constructing two pairwise affinity matrices, $\W^{(1)}$ and $\W^{(2)}$ based on the Gaussian kernel
\begin{align}
& W_{i,j}^{(1)} = \exp\left( -\norm{\s_i^{(1)} - \s_j^{(1)}}^2 / \epsilon^{(1)}\right) \label{eq:k1} \\
& W_{i,j}^{(2)} = \exp\left( -\norm{\s_i^{(2)} - \s_j^{(2)}}^2 / \epsilon^{(2)}\right) \label{eq:k2}.
\end{align}
The constants $\epsilon^{(1)}$ and $\epsilon^{(2)}$ have a similar interpretation to the one presented in Section \ref{subsec:DiffusionMaps}. The algorithm proceeds by normalizing $\W^{(1)}$ and $\W^{(2)}$ to be column-stochastic, yielding two matrices $\K^{(1)}$ and $\K^{(2)}$, where the sum of each of their columns equals one. As a result, each stochastic matrix can be interpreted as a transition probability matrix of a Markov chain on a graph whose vertices are the samples (as described in Section \ref{subsec:DiffusionMaps}). In other words, the $(i,j)$-th entry in $\K^{(1)}$ or in $\K^{(2)}$ represents the probability of transition to the $i$-th vertex from the $j$-th vertex in the graph. Importantly, by construction \eqref{eq:k1}, $\K^{(1)}$ describes a Markov chain that jumps in high probability from the $j$-th vertex to the $i$-th vertex if the underlying values of both $\X$ and $\Y$ are similar (namely, $\x_i$ is similar to $\x_j$ and $\y_i$ is similar to $\y_j$). Analogously, by construction \eqref{eq:k2}, $\K^{(2)}$ describes a Markov chain that jumps in high probability from the $j$-th vertex to the $i$-th vertex if the underlying realizations of both $\X$ and $\Z$ are similar.

Given the normalized matrices $\K^{(1)}$ and $\K^{(2)}$, an \ac{AD} kernel is then defined by
\begin{equation}
\K = \K^{(2)} \K^{(1)}.
\end{equation}
This corresponds to a transition probability matrix $\K$ consisting of two consecutive, {\em alternating} steps -- the first step is employed according to $\K^{(1)}$ and second according to $\K^{(2)}$. Next, by raising $\K$ to the power of $m$, we obtain a transition matrix $\K^m$ that corresponds to $2m$ steps, where the odd steps correspond to $\K^{(1)}$ and the even steps correspond to $\K^{(2)}$. Consequently, the odd steps jump (in high probability) to a vertex where both $\X$ and $\Y$ values are similar, while the even steps jump (in high probability) to a vertex where both $\X$ and $\Z$ are similar. As a result, after many (odd and even) steps, we maintain similarity only to the $\X$ value whereas the $\Y$ and $\Z$ may vary significantly.

In order to obtain an affinity matrix in terms of the common variable $\X$ between pairs of samples $(\s_i^{(1)},\s_i^{(2)})$ and $(\s_j^{(1)},\s_j^{(2)})$ (recalling that each pair of samples shares the same $\X$ value according to the model assumptions presented in Section \ref{Sec:ProblemFormulation}), the method computes the $\ell_2$ distance between the corresponding columns in the matrix $\K^m$. In \cite{lederman2015learning} a rigorous analysis is provided justifying this statement. Moreover, it was suggested to use a refinement step consisting of an additional diffusion maps application where the columns of $\K^m$ are the new graph vertices, resulting in a low dimensional embedding as defined in \eqref{eq:parametrization}. In the \ac{AD} setting, since the underlying variable of the affinity matrix $\K^m$ is $\X$, we shall denote the resulting embedding by $\hat{\x}$ instead of the general notation of $\hat{\u}$, which was used in \eqref{eq:parametrization}.

\section{Proposed Method}
\label{Sec:ProposedMethod}

The first step towards a full parametrization of all the latent variables underlying the measurements is finding the common latent variable $\X$, as previously suggested, using the \ac{AD} algorithm. Once the common variable is extracted, we proceed to analyzing the measurements from the first and second sensors separately. Hereafter, for the sake of brevity, we will focus on the analysis of the first sensor only, while the analysis of the second sensor is analogous. For each sample $\s_i^{(1)}$, let $\mathcal{N}_i^{(1)}$ be a neighborhood of samples consisting of samples $j$ with a similar common variable. Formally, define
\begin{equation}
	\mathcal{N}_i^{(1)} = \left\{ j \ | \ \| \hat{\x}_j - \hat{\x}_i \|_2 < \eta_i \right\},
\end{equation}
where $\eta_i>0$ is a small tunable threshold. In practice, instead of fixing a threshold $\eta_i$ for every signal, we choose all the neighborhoods $\mathcal{N}_i$ to be of the same size $q$. In other words, the $\eta_i$ are fixed implicitly such that the size of each neighborhood $\mathcal{N}_i$ is equal to $q$. The key point in defining these neighborhoods relies on the assumption that the \ac{AD} algorithm is able to successfully recover the common variable $\X$ and to suppress the sensor specific variable $\Y$. That is, the measurements in these neighborhoods, i.e., $\left\{ \s_j^{(1)} | j\in \mathcal{N}_i^{(1)} \right\}$, share equal, or close, values of $\X$. As a result, the only remaining variability in such neighborhoods of samples stems from variations in $\Y$. 

For each such neighborhood, we propose to compute its sample mean
\begin{equation} \label{eq:mean}
\muu_i^{(1)}=\frac{1}{\left|\mathcal{N}_i^{(1)}\right|}\sum_{j\in \mathcal{N}_i^{(1)}}\s_j^{(1)},
\end{equation}
and its sample covariance
\begin{equation} \label{eq:covariance}
\C_i^{(1)}=\frac{1}{\left|\mathcal{N}_i^{(1)}\right|}\underset{{j\in \mathcal{N}_i^{(1)}}}{\sum}(\s_j^{(1)}-\muu_i^{(1)})(\s_j^{(1)}-\muu_i^{(1)})^T
\end{equation}
both in the domain of the measurements $\{ \s_j^{(1)} \}$. Thus, $(\muu_i^{(1)},\C_i^{(1)})$ can be seen as a Gaussian representation of the local variability of the sensor-specific variable $\Y$ around every sample, and hence, in light of the discussion above, we have a local representation of $\Y$.

In order to get a global parametrization, we compare the local neighborhoods by means of ``registration'' of point clouds or Gaussian distributions. Consider the following affinity kernel
\begin{align} \label{eq:mahalanobis_distance}
&\widetilde{W}_{i,j}^{(1)} = \text{exp}\left\{-\frac{1}{\epsilon}
\left((\s_i^{(1)} - \muu_i^{(1)}) - (\s_j^{(1)} -\muu_j^{(1)})\right)^T
\left({\C_i^{(1)}}^\dagger+{\C_j^{(1)}}^\dagger\right) \right. \\
&\phantom{............................}
\bigg. \left((\s_i^{(1)} - \muu_i^{(1)}) - (\s_j^{(1)} - \muu_j^{(1)})\right)
\bigg\}.
\end{align}
We have denoted by ${\C_i^{(1)}}^\dagger$ the Moore-Penrose pseudoinverse, which is employed since the rank of the covariance matrix is lower than its dimension. This follows the underlying assumption that the dimension of the measurements $d_1$ is larger than the dimension of the sensor-specific variable $d_y$. We note that the omission of the low eigenvalues and their corresponding eigenvectors, as done by the pseudoinverse, results both in denoising possible ambient noise and estimation inaccuracies, and also in the attenuation of the common variable $\X$ remainders, thereby enhancing the desired variation -- that of the sensor-specific variable only.

\begin{algorithm}[t]
\textbf{Input}: signals $\{\s_i^{(1)}\}_{i=1}^N$ and $\{\s_i^{(2)}\}_{i=1}^N$ originating from both sensors.\newline
\textbf{Output}: parametrizations of the sensor specific variables $\Y$ and $\Z$.
\begin{enumerate}
	\item Compute the parametrization $\hat{\x}$ of the common variable $\X$ using the alternation-diffusion algorithm.
	\item For each signal $\s_i^{(1)}$:
	\begin{enumerate}
		\item Find the local neighborhood of $\s_i^{(1)}$ denoted by $\mathcal{N}_i^{(1)}$ in terms of the parametrization found in the previous step.
		\item Compute the local mean, using Equation \eqref{eq:mean}, and the local Covariance, using Equation \eqref{eq:covariance}.
	\end{enumerate}
	\item Compute the affinity matrix using the Mahalanobis distance between the signals $\s_i^{(1)}$ and $\s_j^{(1)}$, as done in Equation \eqref{eq:mahalanobis_distance}.
	\item Apply the standard diffusion maps algorithm on the above matrix to obtain a parametrization $\hat{\y}$ for the variable $\Y$.
	\item Repeat the above steps for the second sensor.
\end{enumerate}

\caption{The proposed algorithm.}
\label{alg1}
\end{algorithm}

The distance in the Gaussian kernel in \eqref{eq:mahalanobis_distance} is a modified Mahalanobis distance between the signal samples $\s_i^{(1)}$ and $\s_j^{(1)}$, which was presented in \cite{singer2008non}, with the exception that the local covariance matrices are computed based on neighborhoods in the extracted common variable domain. In \cite{talmon2013empirical,talmon2015intrinsic,talmon2015manifold}, this distance was used in the context of manifold learning and diffusion maps to determine an intrinsic representation of (single) sensor data, invariant to interferences and measurement modalities.
Such a manipulation of the Mahalanobis distance via the neighborhood choice was suggested in \cite{mishne2015graph} for the task of sea mine detection in sonar images, and in \cite{dsilva2016data} of reduction of stochastic dynamical systems. There, by controlling the locality within a pre-defined training set, a new metric, which is invariant to perturbations in the appearance of the target, was presented. In our work, rather than building invariances, we use a similar approach by appropriately choosing the neighborhoods in a multi-sensor setting in order to obtain a full parametrization of all the underlying sources of variability.

Once the affinity kernel $\widetilde{\W}^{(1)}$ is constructed, given that it captures only the variability of the (desired) sensor-specific variable $\Y$, we apply the standard diffusion maps algorithm in order to find the parametrization of the underlying variable $\Y$, denoted by $\hat{\y}$. The proposed algorithm is summarized in Algorithm \ref{alg1}.

\section{Theoretical Analysis}
\label{Sec:TheoreticalAnalysis}
In this section, we provide a theoretical analysis, showing that indeed the proposed algorithm approximates the distance between two signal samples in terms of the sensor-specific variable. As above, without loss of generality, we focus on signal samples arising from the first sensor, and therefore, our goal is to extract the variable $\Y$. For simplicity, in this section we omit the sensor index. A similar derivation to the one presented in this section was done in \cite{singer2008non} and in \cite{mishne2015graph}. Here, we highlight the significant differences both in terms of the analysis and in terms of the underlying assumptions.

\begin{assumption}\label{as:1}
If the measurement sample $\s_j=g(\x_j,\y_j)$ belongs to the neighborhood of $\s_i=g(\x_i,\y_i)$, i.e., $j \in \mathcal{N}_i^{(1)}$, then $\| \x_j - \x_i \|_2 = O( \| \y_j - \y_i \|_2^2)$.
\end{assumption}
This assumption relies on the ability of \ac{AD} to capture the common variable $\X$, as was proven in \cite{lederman2015learning}. By definition, if a signal sample $\s_j$ is in the neighborhood of $\s_i$, then the distance between their extracted values of common variable $X$ is small (which in practice, is controlled by the tunable threshold $\eta_i$).
Here we further assume that, if a signal sample $\s_j$ is in the neighborhood of $\s_i$, then the distance between their respective $X$ values (which is small by definition) is also smaller than the distance between their associated $Y$ values by at least one order of magnitude.  

\begin{assumption}\label{as:2}
Locally, for every signal sample $\s_i$, the empirical covariance matrix of the sensor-specific variable $\Y$ given the extracted common variable $\X$ is isotropic, i.e., it is given by
\begin{align}
\sum_{j\in\mathcal{N}_i^{(1)}} (\y_j - \y_i)(\y_j - \y_i)^T = \mathbf{I},
\end{align}
where $\mathbf{I}$ is the identity matrix.
\end{assumption}
While Assumption \ref{as:2} may seem to be artificial and restrictive, in Section \ref{Sec:Experiments}, we present experimental results supporting it empirically. In addition, we note that it was used in slightly different contexts in \cite{singer2008non,talmon2015manifold,tipping1999probabilistic} and successfully applied in many applications with real measured data. The following result follows Assumption \ref{as:1} and Assumption \ref{as:2}.

\begin{thm} \label{Thm:1}
For any signal sample $\s_i=g(\x_j,\y_j)$, if Assumption \ref{as:1} and Assumption \ref{as:2} are satisfied, then 
\begin{equation}
\sum_{j\in\mathcal{N}_i} (\s_j-\s_i)(\s_j-\s_i)^T = \J_i^y{\J_i^y}^T + O \left( \norm{\y_j - \y_i}_2^3 \right),
\end{equation}
where $\J_i^y$ is the Jacobian of the function $g$ with respect to the variables $\Y$, computed at the $i$-th sample $\s_i$.
\end{thm}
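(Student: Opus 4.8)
The plan is to obtain the identity from a second-order Taylor expansion of the observation map $g$ about the base point $(\x_i,\y_i)$, with Assumption~\ref{as:1} used purely as a bookkeeping device to convert every power of $\norm{\x_j-\x_i}_2$ into a higher power of $\norm{\y_j-\y_i}_2$. For a neighbor $j\in\mathcal{N}_i$ I would write
\begin{equation}
\s_j-\s_i = \J_i^y(\y_j-\y_i) + \v_{ij},
\end{equation}
where $\J_i^y$ is the Jacobian of $g$ with respect to $\Y$ at $\s_i$ and $\v_{ij}$ collects the $\X$-linear Jacobian term $\J_i^x(\x_j-\x_i)$ together with the full second- and higher-order Taylor remainder. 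The first step is to show $\v_{ij}=O(\norm{\y_j-\y_i}_2^2)$: by Assumption~\ref{as:1}, $\norm{\x_j-\x_i}_2=O(\norm{\y_j-\y_i}_2^2)$, so $\J_i^x(\x_j-\x_i)=O(\norm{\y_j-\y_i}_2^2)$; and since $g$ is smooth (twice continuously differentiable on the neighborhood), the quadratic remainder is controlled by $\norm{\x_j-\x_i}_2^2+\norm{\x_j-\x_i}_2\,\norm{\y_j-\y_i}_2+\norm{\y_j-\y_i}_2^2$, which is again $O(\norm{\y_j-\y_i}_2^2)$ once Assumption~\ref{as:1} is applied to the first two summands.

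Next I would expand the outer product,
\begin{equation}
\begin{aligned}
(\s_j-\s_i)(\s_j-\s_i)^T ={}& \J_i^y(\y_j-\y_i)(\y_j-\y_i)^T{\J_i^y}^T + \J_i^y(\y_j-\y_i)\v_{ij}^T \\
&{}+ \v_{ij}(\y_j-\y_i)^T{\J_i^y}^T + \v_{ij}\v_{ij}^T ,
\end{aligned}
\end{equation}
and observe that the two cross terms are a first-order factor times a second-order factor, hence $O(\norm{\y_j-\y_i}_2^3)$, while $\v_{ij}\v_{ij}^T=O(\norm{\y_j-\y_i}_2^4)$. Summing over $j\in\mathcal{N}_i$ and pulling the sample-independent Jacobian $\J_i^y$ outside the sum then gives
\begin{equation}
\sum_{j\in\mathcal{N}_i}(\s_j-\s_i)(\s_j-\s_i)^T = \J_i^y\left(\sum_{j\in\mathcal{N}_i}(\y_j-\y_i)(\y_j-\y_i)^T\right){\J_i^y}^T + O\!\left(\norm{\y_j-\y_i}_2^3\right).
\end{equation}
Finally, invoking Assumption~\ref{as:2} to replace the bracketed empirical covariance of $\Y$ by $\mathbf{I}$ yields $\J_i^y{\J_i^y}^T$ plus the stated error term, which is the assertion.

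The points that require care, though not lengthy computation, are the following. First, the symbol $O(\norm{\y_j-\y_i}_2^3)$ on the right-hand side should be read as $O(\delta_i^3)$ with $\delta_i=\max_{j\in\mathcal{N}_i}\norm{\y_j-\y_i}_2$ the $\Y$-diameter of the neighborhood, the cardinality $|\mathcal{N}_i|$ being absorbed into the implied constant. Second, the uniform control of all Taylor remainders over the neighborhood must be stated explicitly, which follows from the smoothness of $g$ on the (bounded) neighborhood. I expect the only genuinely substantive point to be the order accounting for the mixed second-order term $\norm{\x_j-\x_i}_2\,\norm{\y_j-\y_i}_2$ and, after squaring, for the cross terms involving $\J_i^x(\x_j-\x_i)$: this is precisely where Assumption~\ref{as:1} is indispensable, since without the quadratic relation $\norm{\x_j-\x_i}_2=O(\norm{\y_j-\y_i}_2^2)$ these contributions would only be $O(\norm{\y_j-\y_i}_2^2)$ and could not be absorbed into the claimed $O(\norm{\y_j-\y_i}_2^3)$ error.
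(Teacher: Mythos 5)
Your proposal is correct and follows essentially the same route as the paper: a Taylor expansion of $g$ about $(\x_i,\y_i)$, Assumption~\ref{as:1} to absorb the $\X$-dependent linear and mixed terms into an $O(\norm{\y_j-\y_i}_2^2)$ remainder, expansion of the outer product with the cross terms contributing $O(\norm{\y_j-\y_i}_2^3)$, and Assumption~\ref{as:2} to reduce the summed $\Y$-covariance to the identity. Your additional bookkeeping (reading the error as $O(\delta_i^3)$ with $\delta_i$ the neighborhood $\Y$-diameter and noting the uniform control of the remainders) only makes explicit what the paper leaves implicit.
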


\begin{proof}
Using Taylor expansion, we can linearly approximate the observation function $g(\x_j,\y_j)$ around the point $(\x_i,\y_i)$, obtaining
\begin{align} \label{eq:taylor_approx_xy}
\s_j - \s_i & = \J_i^x (\x_j - \x_i) + \J_i^y (\y_j - \y_i) \nonumber \\
& + O \left( \norm{ \x_j - \x_i }_2^2 + \norm{ \y_j - \y_i }_2^2 + \left( \x_j - \x_i \right)^T \left( \y_j - \y_i \right) \right),
\end{align}
where $\J_i^x$ and $\J_i^y$ are the Jacobians at the $i$-th sample, $\s_i$, with respect to the variables $\X$ and $\Y$, respectively. The last term in \eqref{eq:taylor_approx_xy} encapsulates all the higher order derivatives that do not appear in this linear approximation.
Under Assumption \ref{as:1}, \eqref{eq:taylor_approx_xy} can be rewritten as
\begin{align} \label{eq:taylor_approx_x}
\s_j - \s_i & = \J_i^y (\y_j - \y_i) + O \left( \norm{\y_j - \y_i}_2^2 \right).
\end{align}
for any $j \in \mathcal{N}_i$.
Using \eqref{eq:taylor_approx_x} and by Assumption \ref{as:2}, the empirical covariance around the sample $\s_i$ is given by
\begin{align} \label{eq:expectation2}
 \sum_{j\in\mathcal{N}_i} (\s_j-\s_i)(\s_j-\s_i)^T = & \sum_{j\in\mathcal{N}_i} \J_i^y (\y_j - \y_i)(\y_j - \y_i)^T{\J_i^y}^T + O \left( \norm{\y_j - \y_i}_2^3 \right) \\
= & \ \J_i^y{\J_i^y}^T + O \left( \norm{\y_j - \y_i}_2^3 \right),
\end{align}
concluding our proof.
\end{proof}
Theorem \ref{Thm:1} shows that in order to estimate empirically the Gram matrix $\J_i^y{\J_i^y}^T$ of the Jacobian $\J_i^y$, we can simply compute the empirical covariance matrix of the samples in the neighborhood of $\x_i$, where the neighborhood is defined by the AD metric. This is accomplished without the knowledge of the function $g$ itself. In the next result, we use this Gram matrix for estimating the distance between a pair of signal samples in terms of the sensor-specific variable $\Y$.

\begin{thm} \label{Thm:2}
For any two signal sample $\s_i=g(\x_i,\y_i)$ and $\s_j=g(\x_j,\y_j)$, the Euclidean distance between the corresponding realizations of the sensor-specific variable is given by
\begin{align}
\norm{\y_j - \y_i}_2^2 & = (\s_j - \s_i)^T \left( \J_i^y {\J_i^y}^T \right)^\dagger (\s_j - \s_i) \\
& + O(\norm{\s_j - \s_i}_2^3).
\end{align}
\end{thm}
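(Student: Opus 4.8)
The plan is to start from the local linear approximation established in the proof of Theorem~\ref{Thm:1}, namely equation~\eqref{eq:taylor_approx_x}, which under Assumption~\ref{as:1} reads $\s_j - \s_i = \J_i^y(\y_j - \y_i) + O(\norm{\y_j - \y_i}_2^2)$. The goal is to invert this relation: express $\norm{\y_j - \y_i}_2^2$ in terms of the observable quantity $\s_j - \s_i$ and the Gram matrix $\J_i^y{\J_i^y}^T$, which by Theorem~\ref{Thm:1} is itself estimable from the data. The natural tool is the pseudoinverse, since $\J_i^y$ is a $d_1 \times d_y$ matrix with $d_1 > d_y$, so $\J_i^y{\J_i^y}^T$ is rank-deficient.

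First I would observe that, since $g$ is bilipschitz and locally invertible in its arguments, the Jacobian $\J_i^y$ has full column rank $d_y$; hence ${\J_i^y}^\dagger \J_i^y = \mathbf{I}_{d_y}$, and $\left(\J_i^y{\J_i^y}^T\right)^\dagger = {\J_i^y}^{T\dagger}{\J_i^y}^\dagger$ with ${\J_i^y}^{T\dagger} = \J_i^y\left({\J_i^y}^T\J_i^y\right)^{-1}$ and ${\J_i^y}^\dagger = \left({\J_i^y}^T\J_i^y\right)^{-1}{\J_i^y}^T$. Next I would plug the linear approximation into the quadratic form on the right-hand side of the claim:
\begin{align}
(\s_j - \s_i)^T\left(\J_i^y{\J_i^y}^T\right)^\dagger(\s_j - \s_i)
&= \left(\J_i^y(\y_j - \y_i) + O(\norm{\y_j - \y_i}_2^2)\right)^T \left(\J_i^y{\J_i^y}^T\right)^\dagger \nonumber \\
&\quad \times \left(\J_i^y(\y_j - \y_i) + O(\norm{\y_j - \y_i}_2^2)\right).
\end{align}
The leading term is $(\y_j - \y_i)^T{\J_i^y}^T\left(\J_i^y{\J_i^y}^T\right)^\dagger\J_i^y(\y_j - \y_i)$. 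Using the full-column-rank identities above, ${\J_i^y}^T\left(\J_i^y{\J_i^y}^T\right)^\dagger\J_i^y = {\J_i^y}^T{\J_i^y}^{T\dagger}{\J_i^y}^\dagger\J_i^y = \mathbf{I}_{d_y}$, so the leading term collapses to exactly $\norm{\y_j - \y_i}_2^2$. The cross terms and the quadratic remainder term each carry at least one factor of $O(\norm{\y_j - \y_i}_2^2)$, so they contribute $O(\norm{\y_j - \y_i}_2^3)$; finally, since $\s_j - \s_i = \J_i^y(\y_j - \y_i) + O(\norm{\y_j - \y_i}_2^2)$ and $\J_i^y$ is bilipschitz in the relevant range, $\norm{\y_j - \y_i}_2$ and $\norm{\s_j - \s_i}_2$ are comparable up to higher-order corrections, so $O(\norm{\y_j - \y_i}_2^3) = O(\norm{\s_j - \s_i}_2^3)$, matching the stated error term.

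The main obstacle, and the point that needs care rather than routine bookkeeping, is the interplay between the rank-deficiency of $\J_i^y{\J_i^y}^T$ and the remainder term $O(\norm{\y_j - \y_i}_2^2)$ in the linear approximation: that remainder need not lie in the column space of $\J_i^y$, so one must argue that its component orthogonal to that column space is annihilated by the pseudoinverse while its in-range component only produces an $O(\norm{\y_j - \y_i}_2^3)$ contribution after contraction with the (bounded on its support) operator $\left(\J_i^y{\J_i^y}^T\right)^\dagger$. One subtlety worth flagging is that Theorem~\ref{Thm:1} gives us the empirical covariance matrix, which equals $\J_i^y{\J_i^y}^T$ only up to $O(\norm{\y_j - \y_i}_2^3)$; if one wants the statement in terms of the \emph{estimated} Gram matrix rather than the exact one, a perturbation argument for the pseudoinverse is needed, but as literally stated the theorem uses the exact $\J_i^y{\J_i^y}^T$, so this can be deferred to a remark. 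I would also note in passing that this inversion step mirrors the argument in \cite{singer2008non}, with the distinction — emphasized already in the text — that here the neighborhoods $\mathcal{N}_i$ are defined via the AD-recovered common variable rather than in the ambient measurement space, which is precisely what makes Assumption~\ref{as:1} (and hence the reduction to $\eqref{eq:taylor_approx_x}$) available.
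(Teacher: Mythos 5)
Your proof is correct, but it follows a genuinely different route from the paper's. The paper Taylor-expands the inverse map $g^{-1}$ around $\s_i$, isolates the $\Y$-block to get $\y_j-\y_i = \Q_i^y(\s_j-\s_i)+O(\norm{\s_j-\s_i}_2^2)$, takes norms, and then invokes the inverse function theorem to identify ${\Q_i^y}^T\Q_i^y$ with $\left(\J_i^y{\J_i^y}^T\right)^\dagger$ (the pseudoinverse replacing the inverse since $d_1>d_x+d_y$). You never introduce the inverse Jacobian: you substitute the forward expansion \eqref{eq:taylor_approx_x} directly into the target quadratic form and collapse the leading term with the exact algebraic identity ${\J_i^y}^T\left(\J_i^y{\J_i^y}^T\right)^\dagger\J_i^y=\mathbf{I}$, valid because $\J_i^y$ has full column rank. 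What your route buys: it replaces the paper's somewhat informal inverse-function-theorem step (which identifies $\Q_i^y$ with ${\J_i^y}^\dagger$ only because \eqref{eq:taylor_approx_x} has already suppressed the $\X$-variation) with a purely algebraic pseudoinverse identity, and it makes explicit where Assumption \ref{as:1} and the rank structure enter, including the correct handling of the remainder component orthogonal to the range of $\J_i^y$. What it costs: your error naturally comes out as $O(\norm{\y_j-\y_i}_2^3)$ and you need the bilipschitz property of $g$ to convert it to the stated $O(\norm{\s_j-\s_i}_2^3)$, whereas the paper's expansion of $g^{-1}$ yields the error directly in powers of $\norm{\s_j-\s_i}_2$; also, by leaning on \eqref{eq:taylor_approx_x} your argument is literally valid only for $j\in\mathcal{N}_i$, though the paper's identification of ${\Q_i^y}^T\Q_i^y$ with the pseudoinverse carries the same implicit restriction, so this is not a gap relative to the paper. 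Your closing remark distinguishing the exact Gram matrix from the empirical covariance estimated via Theorem \ref{Thm:1} is consistent with how the paper treats the matter.
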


\begin{proof}
In the proof of Theorem \ref{Thm:1}, we considered the Taylor expansion of the observation function $g$ from the domain of the latent variables $\X$ and $\Y$ to the range of the measured signal. Similarly, consider the Taylor expansion of its inverse function $g^{-1}$ (recalling that $g$ is assumed bilipschitz), which is given by
\begin{equation}
\begin{bmatrix}
\x_j - \x_i \\
\y_j - \y_i
\end{bmatrix}
= \begin{bmatrix}
\Q_i^x \\
\Q_i^y
\end{bmatrix}
(\s_j - \s_i) + O(\norm{\s_j - \s_i}_2^2),
\end{equation}
where $\Q_i^x$ and $\Q_i^y$ are the Jacobian matrices of $g^{-1}$ with respect to the variables $\X$ and $\Y$, respectively. 
Isolating the $\Y$ variable yields
\begin{equation} \label{eq:taylor_approx_s}
\y_j - \y_i = \Q_i^y (\s_j - \s_i) + O(\norm{\s_j - \s_i}_2^2).
\end{equation}
By applying the $\ell_2$ norm to both sides, we obtain
\begin{align} \label{eq:l2_diff_KK}
\norm{\y_j - \y_i}_2^2 & = (\s_j - \s_i)^T {\Q_i^y}^T \Q_i^y (\s_j - \s_i) + O(\norm{\s_j - \s_i}_2^3).
\end{align}
The Taylor expansions in \eqref{eq:taylor_approx_x} and \eqref{eq:taylor_approx_s} correspond to $g$ and $g^{-1}$, respectively. 
Thus, due to the inverse function theorem, we have
\begin{equation}\label{eq:pinv}
{\Q_i^y}^T \Q_i^y = \left( \J_i^y {\J_i^y}^T \right)^{-1} .
\end{equation}
Typically, the dimension of the measurements is larger than the sum of the dimensions of the common and sensor-specific variables, i.e., $d_1 > d_x+d_y$. As a result, the number of rows in $\J_i^y$ is larger than the number of columns, and hence, the Gram matrix $\J_i^y {\J_i^y}^T$ is not full-rank. Consequently, it is not invertible and one needs to employ a pseudo-inverse operator instead. 
By substituting \eqref{eq:pinv} into \eqref{eq:l2_diff_KK}, we obtain
\begin{align}
\norm{\y_j - \y_i}_2^2 & = (\s_j - \s_i)^T \left( \J_i^y {\J_i^y}^T \right)^\dagger (\s_j - \s_i) + O(\norm{\s_j - \s_i}_2^3),
\end{align}
as required.
\end{proof}
Two important notes are due at this point.
One is that the above analysis is based on the Taylor expansion around the sample $\s_i$. If we repeat the derivations with the Taylor expansion around the sample $\s_j$ as well, then, the mean of the two resulting expressions is given by
\begin{align} \label{eq:final_exppresion}
\norm{\y_j - \y_i}_2^2 & = \frac{1}{2}(\s_j - \s_i)^T \left( \left( \J_i^y {\J_i^y}^T \right)^\dagger + \left( \J_j^y {\J_j^y}^T \right)^\dagger \right) (\s_j - \s_i) \\
& + O(\norm{\s_j - \s_i}_2^3).
\end{align}
Once such symmetrization is employed, further analysis presented in \cite{singer2008non} improves the order of the error term in \eqref{eq:final_exppresion} to $\norm{\s_j - \s_i}_2^4$.
Two is that the above analysis assumes (unrealistically) that every signal has a local mean equal to zero, i.e., $\muu_i=\mathbf{0}$. However, as presented in \cite{talmon2015manifold}, a similar derivation can be done without such an assumption.
Combining these two notes results in the expression presented in \eqref{eq:mahalanobis_distance}, without a rigorous proof.

To conclude, the analysis presented in this section coincides with the proposed method. Indeed, in Algorithm \ref{alg1}, we begin by seeking for signals close in terms of the common variable using the parametrization obtained from the \ac{AD} algorithm. Once such a neighborhood is found, we compute its local empirical covariance matrix \eqref{eq:covariance}, which is then used in the modified Mahalanobis distance \eqref{eq:mahalanobis_distance} to approximate the desired Euclidean distance. Theorem \ref{Thm:1} proves that the aforementioned empirical covariance approximates the Gram matrix $\J_i^y {\J_i^y}^T$. According to Theorem \ref{Thm:2}, this approximation of the Gram matrix can be used to approximate the Euclidean distance in terms of the desired sensor-specific variable $\Y$ via the modified Mahalanobis distance \eqref{eq:final_exppresion} (which is used in Algorithm \ref{alg1}).

Final remark concerns the accuracy of the Euclidean distances approximation. Theorem \ref{Thm:2} implies that when the distances are large, the error terms are large and the (local) approximation via the linear terms is poor.
This problem is ``automatically'' alleviated by the standard usage of the Gaussian kernel in \eqref{eq:mahalanobis_distance}; due to its fast decay, large distances are implicitly attenuated.

\section{Experimental Results}
\label{Sec:Experiments}
\subsection{Synthetic Example}
Consider three independent and identically distributed random variables, $\X$, $\Y$, $\Z$, sampled uniformly in $[0,1]$. We generate from these variables $3000$ triplets of $(\x_i,\y_i,\z_i)$.
Assume two sensors observing these hidden samples through the following nonlinear functions $g$ and $h$
\begin{align}
\s^{(1)}_i=g(\x_i,\y_i)=
\begin{bmatrix}
R+r^{(1)}cos(2\pi \y_i)cos(2\pi \x_i)\\
R+r^{(1)}cos(2\pi \y_i)sin(2\pi \x_i)\\
r^{(1)}sin(2\pi \y_i)
\end{bmatrix}\phantom{.}
\end{align}
and
\begin{align}
\s^{(2)}_i=h(\x_i,\z_i)=
\begin{bmatrix}
R+r^{(2)}cos(2\pi \z_i)cos(2\pi \x_i)\\
R+r^{(2)}cos(2\pi \z_i)sin(2\pi \x_i)\\
r^{(2)}sin(2\pi \z_i)
\end{bmatrix},
\end{align}
so that we obtain $3000$ pairs of signal measurements $(\s^{(1)}_i,\s^{(2)}_i)$.
We set $R=10$, $r^{(1)}=4$, $r^{(2)}=2$. 
Notice that $g$ and $h$ correspond to two tori, with major angle $\X$, serving as their common hidden variable, and with minor angles, $\Y$ or $\Z$, serving as their respective sensor-specific variables. 
We apply Algorithm \ref{alg1} to these samples where the size of the neighborhoods in the common variable domain is $q=11$. In Figure \ref{tori} we color both tori according to the extracted parametrization of the common variable and also according to the obtained parametrizations of the two sensor-specific variables. Indeed, we observe that our method accurately extracts the three hidden variables. The coloring of the tori according to the common variable $\X$ is highly coherent with the major angle, while the coloring with respect to the sensor-specific variables, $\Y$ and $\Z$, are consistent with the minor angles.

\begin{figure}[t]
    \centering
    \begin{subfigure}[h]{0.45\textwidth}
        \centering
       	\includegraphics[width=5cm, height=4cm]{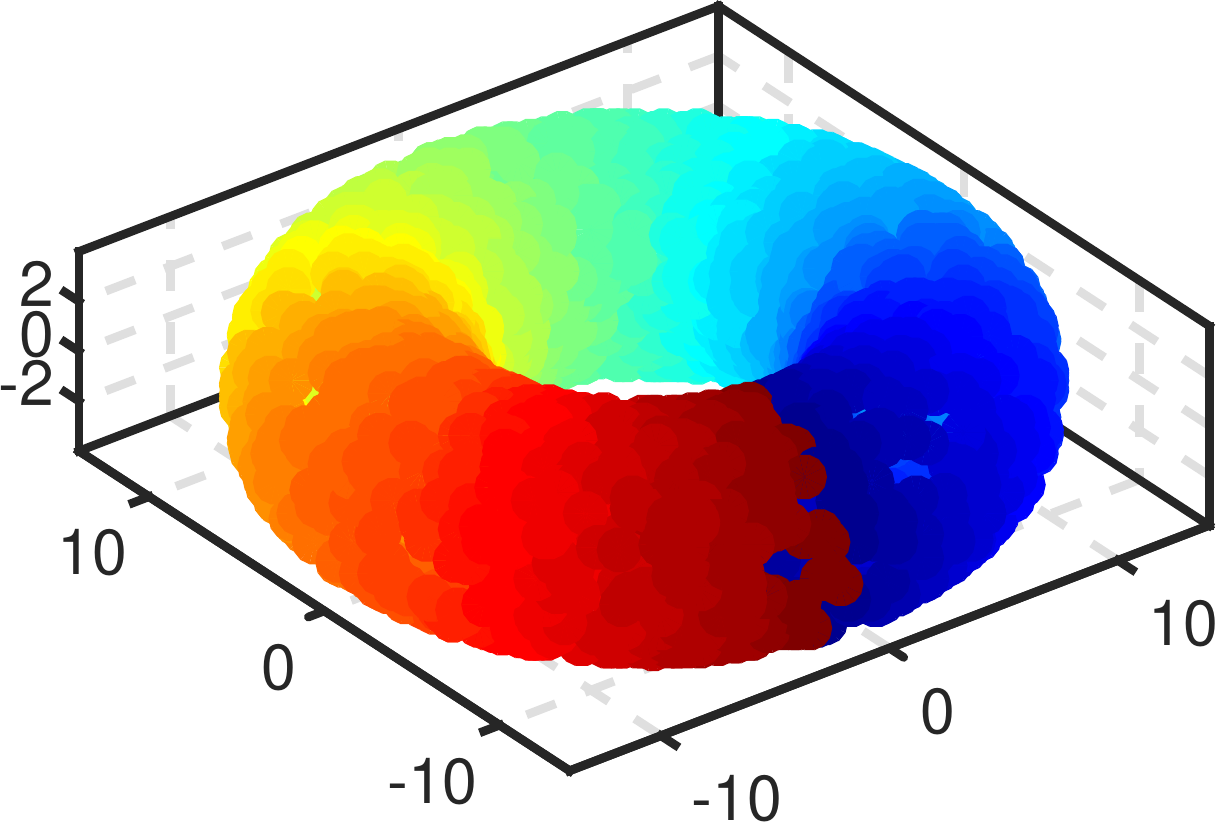}
    \end{subfigure}
    \qquad
	\centering
    \begin{subfigure}[h]{0.45\textwidth}
        \centering
        \includegraphics[width=5cm, height=4cm]{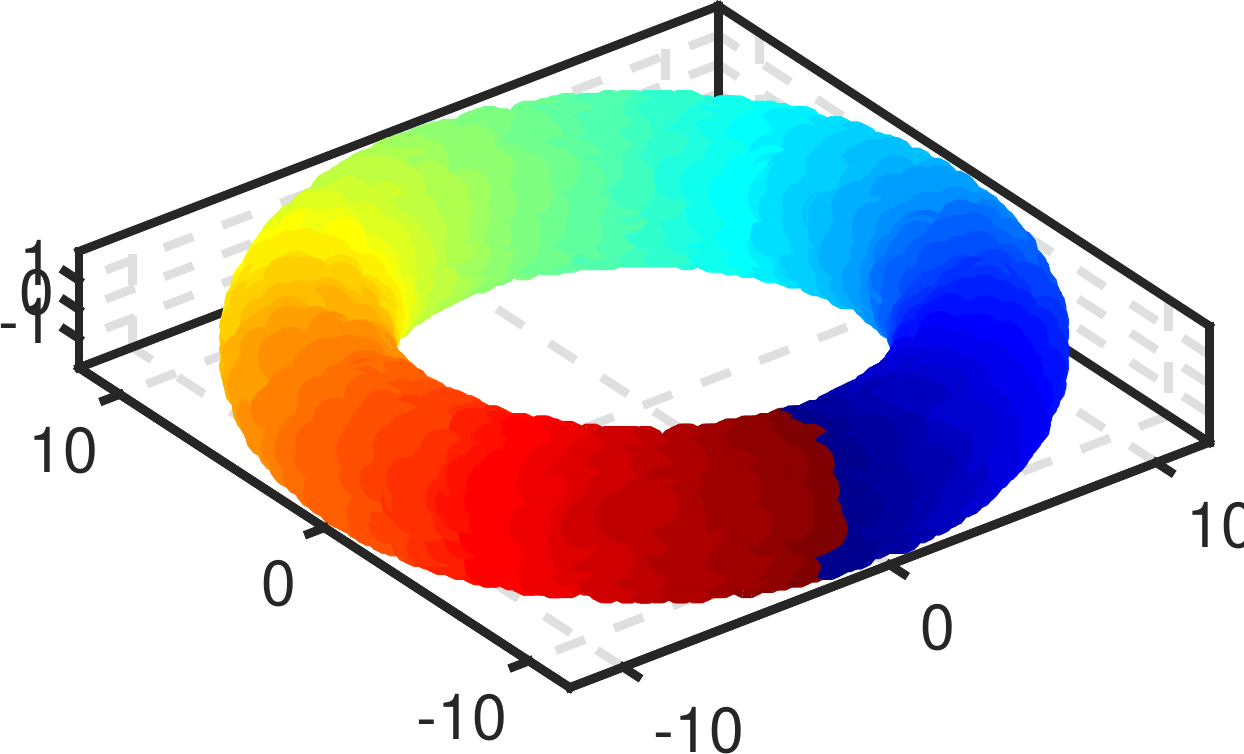}
    \end{subfigure}
    \qquad
    \vfill
    \centering
    \begin{subfigure}[h]{0.45\textwidth}
        \centering
        \includegraphics[width=5cm, height=4cm]{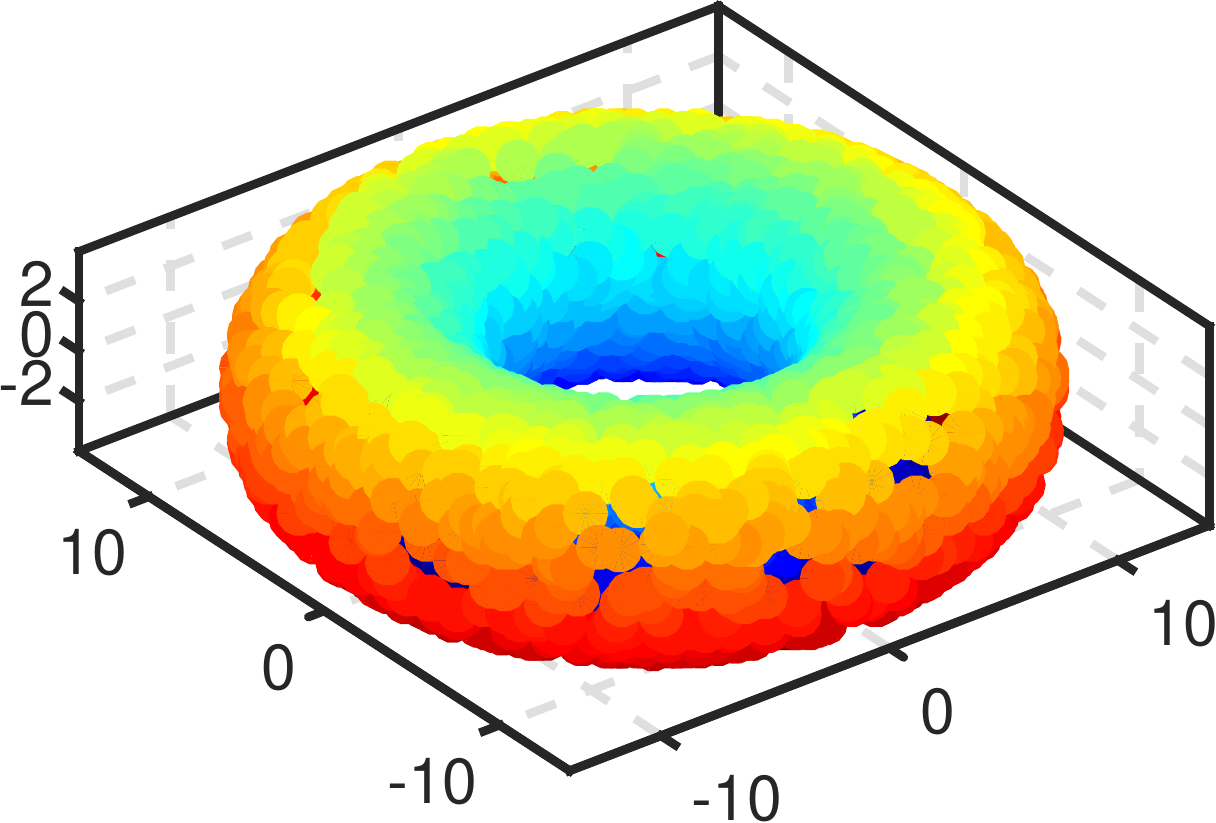}
    \end{subfigure}
    \qquad
    \centering
    \begin{subfigure}[h]{0.45\textwidth}
        \centering
        \includegraphics[width=5cm, height=4cm]{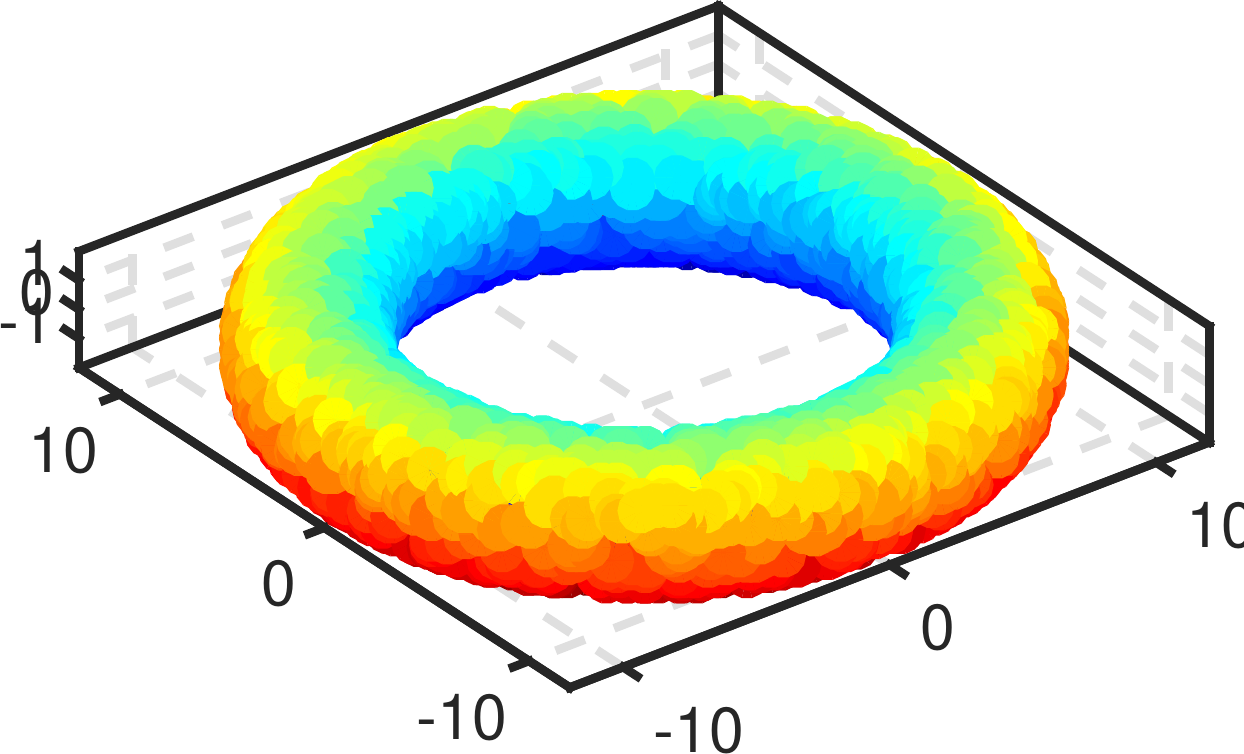}
    \end{subfigure}
    \caption{On the left we plot the samples $\s^{(1)}_i$ and on the right the samples $\s^{(2)}_i$. In the top row, the samples are colored according to the obtained parametrization of the common variable, $\X$. In the bottom row, the samples are colored according to the respective parametrization obtained for the sensor-specific variable, $\Y$ and $\Z$.}
    \label{tori}
\end{figure}

\begin{figure}[t]
    \centering
	\begin{subfigure}[h]{0.45\textwidth}
        \centering
        \includegraphics[width=1\textwidth]{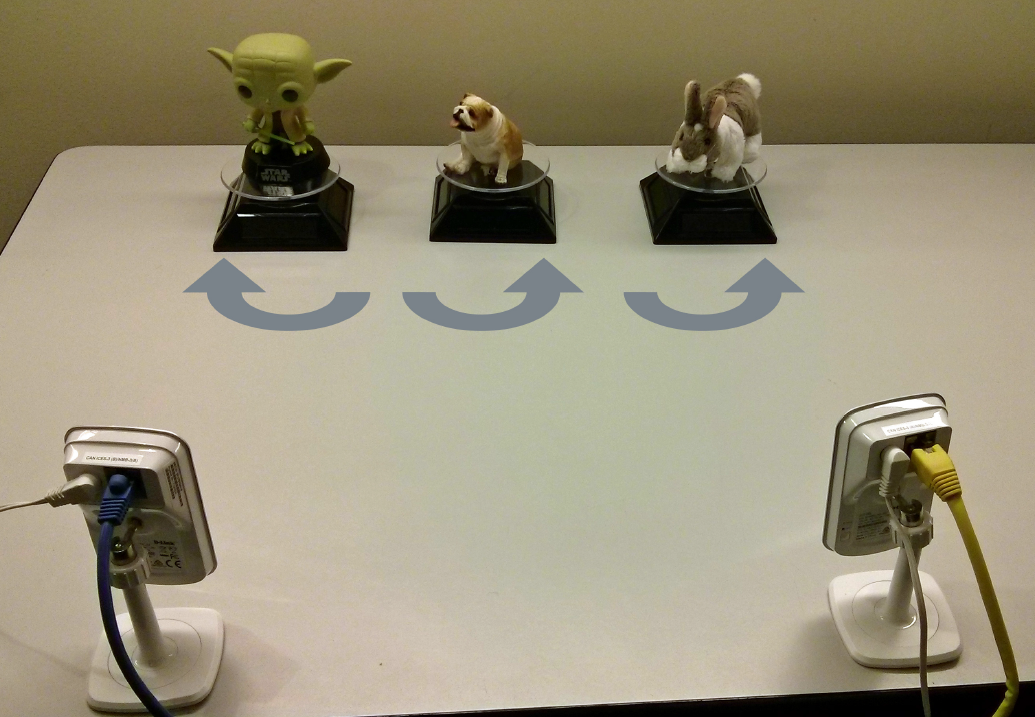}
    \end{subfigure}
    \begin{subfigure}[h]{0.205\textwidth}
        \centering
        \includegraphics[width=1\textwidth]{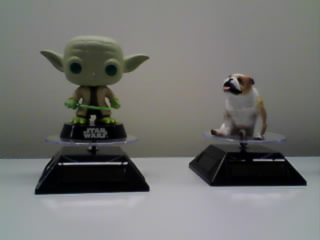}
        
        \includegraphics[width=1\textwidth]{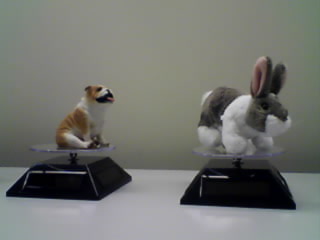}
    \end{subfigure}
    \caption{The experimental setup of the toy problem (left), and examples of images captured simultaneously by the two cameras (right).}
    \label{YodaBulldogRabbit}
\end{figure}

\subsection{Playing with Toys: Yoda, Bulldog and Rabbit}
In this experiment, we consider the toy problem presented in \cite{lederman2015learning}. 
The setting of the problem includes three objects: a figure of Yoda (green alien), a Bulldog, and a Rabbit, which were placed on rotating platforms. The three figures rotate in different speeds, and one (Yoda) in a different direction. This entire scene was captured by two cameras, as demonstrated in Figure \ref{YodaBulldogRabbit}~(left). The view of the first camera included both the figures of Yoda and Bulldog (Figure \ref{YodaBulldogRabbit}~(top-right)), while the view of the second camera included the Bulldog and the Rabbit (Figure \ref{YodaBulldogRabbit}~(bottom-right)). The two cameras were synchronized, i.e., they were taking simultaneous snapshots. In this problem, the latent variables are the orientation angles of the three figures, where the angle of the Bulldog is the common variable $\X$, and the angles of Yoda and of the Rabbit are the sensor specific-variables $\Y$ and $\Z$, respectively. The data at hand consist of images (snapshots) of the rotating figures, captured simultaneously by the two cameras. 

In \cite{lederman2015learning}, it was shown that the \ac{AD} algorithm attains a parametrization of the angle of the Bulldog $\X$, namely, the common variable hidden in the sets of images.
In this work we infer a parametrization of the angle of the sensor-specific Yoda $\Y$. 
In Figure \ref{yoda} we present the result of applying Algorithm \ref{alg1} in this setup where the size of the neighborhoods in the common variable domain is $q=15$. We scatter plot the first two coordinates in the obtained parametrization, $\hat{\y}$, and observe that the parametrization takes the shape of a circle, correctly representing a rotating angle.
To show that this angle is indeed associated with the rotating angle of Yoda, we overlay several images corresponding to the embedded points\footnotemark.
\footnotetext{We flipped the images captured by the camera horizontally for easier viewing of the figure.}
It can be seen that the orientation angle of Yoda corresponds to angles on the obtained circle.

\begin{figure}[t]
	\centering
	\includegraphics[width=0.75\textwidth]{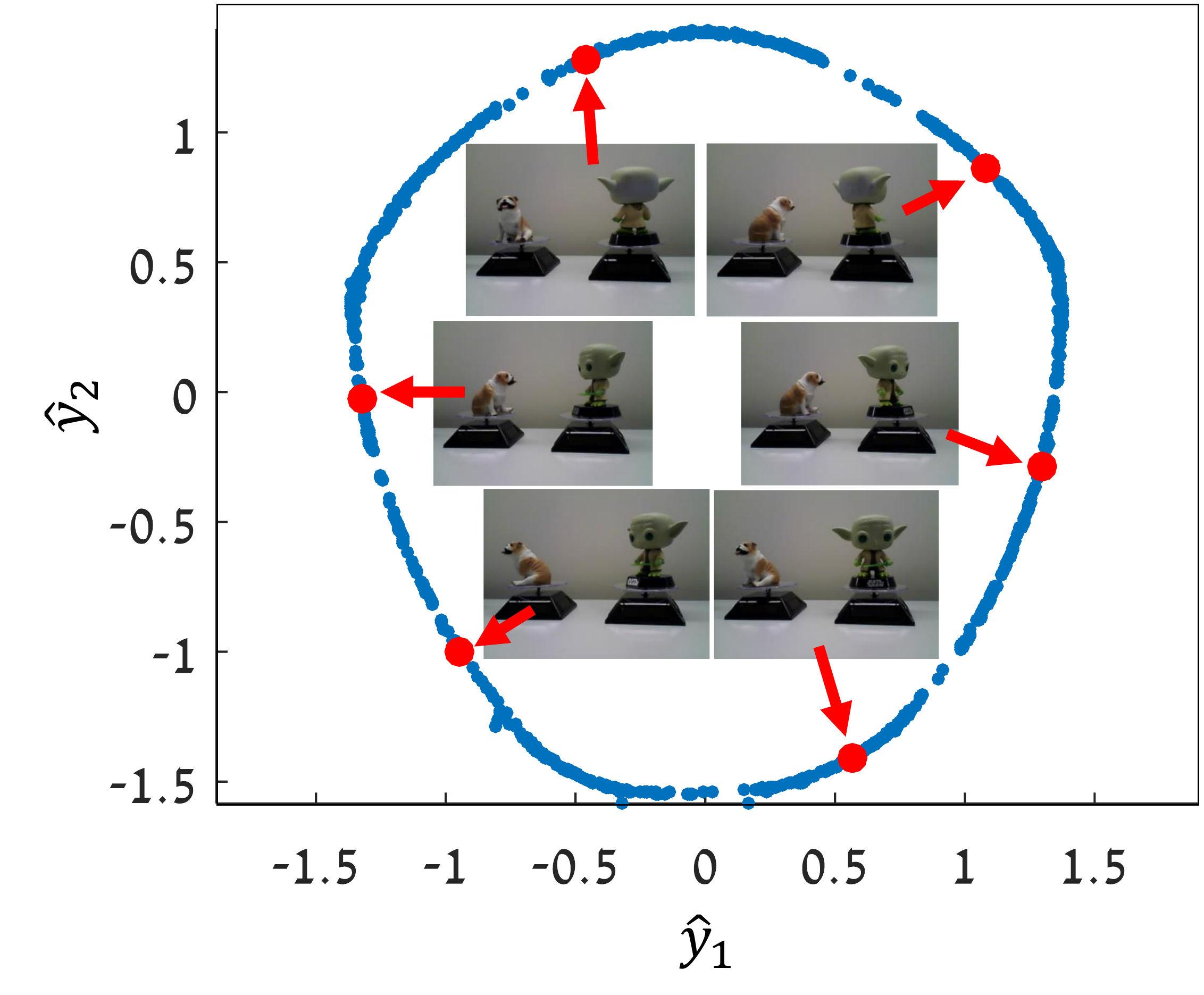}
	\caption{The parametrization obtained by the proposed algorithm, displaying the first two components from $\hat{\y}$. It is demonstrated that Algorithm \ref{alg1} enables us to capture the sensor-specific variable -- the angle of Yoda.}
	\label{yoda}
\end{figure}

\begin{figure}[t]
	\centering
	\setlength\figureheight{0.4\textwidth}
	\setlength\figurewidth{0.8\textwidth}
	\input{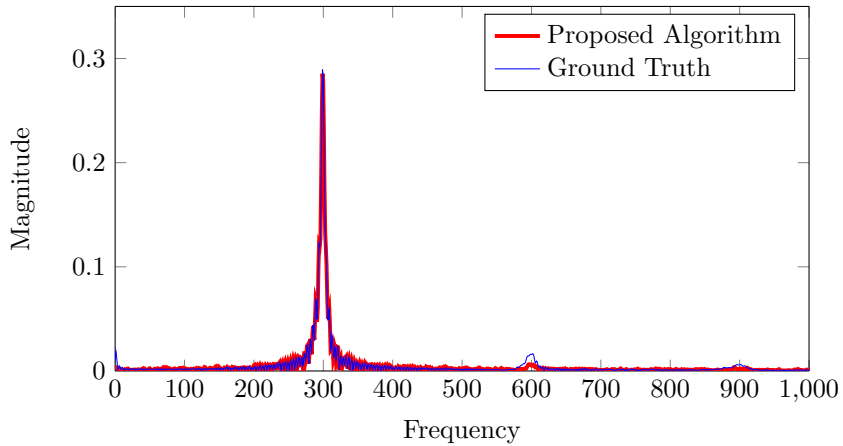}
	\caption{The discrete Fourier transform (magnitude only) of the first components in $\tilde{\y}$ and $\hat{\y}$.
	The parametrization $\tilde{\y}$ (ground truth) is obtained by diffusion maps applied to the cropped images, and the parametrization $\hat{\y}$ is obtained by Algorithm \ref{alg1}.}
	\label{dft}
\end{figure}

For further evaluation, we compute the ground truth parametrization of the angle of Yoda, denoted by $\tilde{\y}$. To this end, we crop all images captured by the first camera, discarding Bulldog and maintaining only Yoda, and then we apply diffusion maps to the set of cropped images.
We emphasize that the information of how to appropriately crop the image is not available to the proposed algorithm, which does not use any prior knowledge on the experimental setting, and it was done for illustration and evaluation purposes only. 
In Figure \ref{dft}, we present the discrete Fourier transform of both the first components of $\tilde{\y}$ and $\hat{\y}$; the parametrization $\tilde{\y}$ is obtained by diffusion maps applied to the cropped images, and the parametrization $\hat{\y}$ is obtained by Algorithm \ref{alg1}. We observe a sharp peak around the frequency $310$, which according to \cite{lederman2015learning} corresponds exactly to the rotation speed of Yoda\footnotemark.
\footnotetext{The frequency is given in terms of the number of cycles completed in the duration of the experiment.}
Moreover, the curves are similar, implying on the successful recovery of the sensor-specific variable by the proposed algorithm.

\subsection{Non-Invasive Fetal ECG}

Fetal heart rate monitoring \cite{freeman2012fetal,rochard1976nonstressed,li2016extract} is widely-used for the assessment of the fetal health both during pregnancy and during delivery. The most accurate method, relying on the placement of electrodes on the fetal scalp, carries many risks. Consequently, non-invasive measurements are usually carried out by placing electrodes on the abdomen of the mother. Naturally, the measured signal contains, in addition to the fetal's heart beats, the maternal \ac{ECG}, masking the desired information. In order to suppress the maternal \ac{ECG} and to extract the fetal \ac{ECG}, another (reference) electrode is often placed on the mother's thorax for the purpose of measuring only the maternal \ac{ECG}.

In practice, in addition to being occluded by the maternal \ac{ECG}, the fetal \ac{ECG} is also contaminated by noise. Power line disturbance and maternal muscle movements (electromyographic activity in the abdomen and uterus muscles of the mother) are only two typical examples of the possible interferences hindering the extraction of the fetal \ac{ECG} \cite{ferrara1982fetal}.

As reported in \cite{ferrara1982fetal}, due to its time-varying statistical character, the \ac{ECG} of the fetal is a highly nonstationary signal. Moreover, the relation between the measured abdomen signal and the fetal \ac{ECG} is arguably nonlinear. As such, standard approaches, e.g., the adaptive least mean squares (LMS) algorithm \cite{widrow1976stationary}, provide only coarse estimations in recovering the fetal \ac{ECG}, and the solution for this problem is not trivial, and it is still considered an open problem.

In \cite{ferrara1982fetal}, the authors suggest to tackle the fetal \ac{ECG} extraction problem by first extracting the maternal \ac{ECG} from the two measurements using adaptive noise cancelers. Then, given the result, that is the fetal \ac{ECG} plus muscle noise, the authors suggest to employ an adaptive signal enhancer in order to extract the fetal \ac{ECG} and attenuate the remaining noise. In particular, this adaptive signal enhancer relies on the alignment in time of an ensemble of similar pulses and the extraction of their statistics. The solution presented in that work is specifically-tailored for fetal \ac{ECG} extraction as it requires, for example, the detection of the peaks in the fetal \ac{ECG} signal using some peak detector. 
In contrast, in the sequel, we show that our approach does not require any knowledge about the task at hand.

We use the fetal \ac{ECG} extraction problem as a testbed for our algorithm not only to demonstrate its applicability, but also to show the relevance of the problem setting we present in this paper to real measured data. 
Let us now return to the problem formulation, as defined in Section \ref{Sec:ProblemFormulation}. In our context, the variable common to both the abdomen and thorax signals is the maternal \ac{ECG}, while the sensor-specific variable in the abdomen signal is the desired fetal \ac{ECG}.

We demonstrate that our proposed method is capable of not only recovering the maternal \ac{ECG} (common variable), but also factoring out the mother's pulse from the measured abdomen signal. This results in revealing the fetal \ac{ECG} (sensor-specific variable), which is relatively weak when compared to the maternal signal. More specifically, we show that our method builds a parameterization of the fetal \ac{ECG}, which, in turn, could aid in detecting fetal QRS complexes which are used in measuring the fetal's heart rate.

\begin{figure}[t]
	\centering
	\includegraphics[width=0.6\textwidth,height=0.4\textwidth]{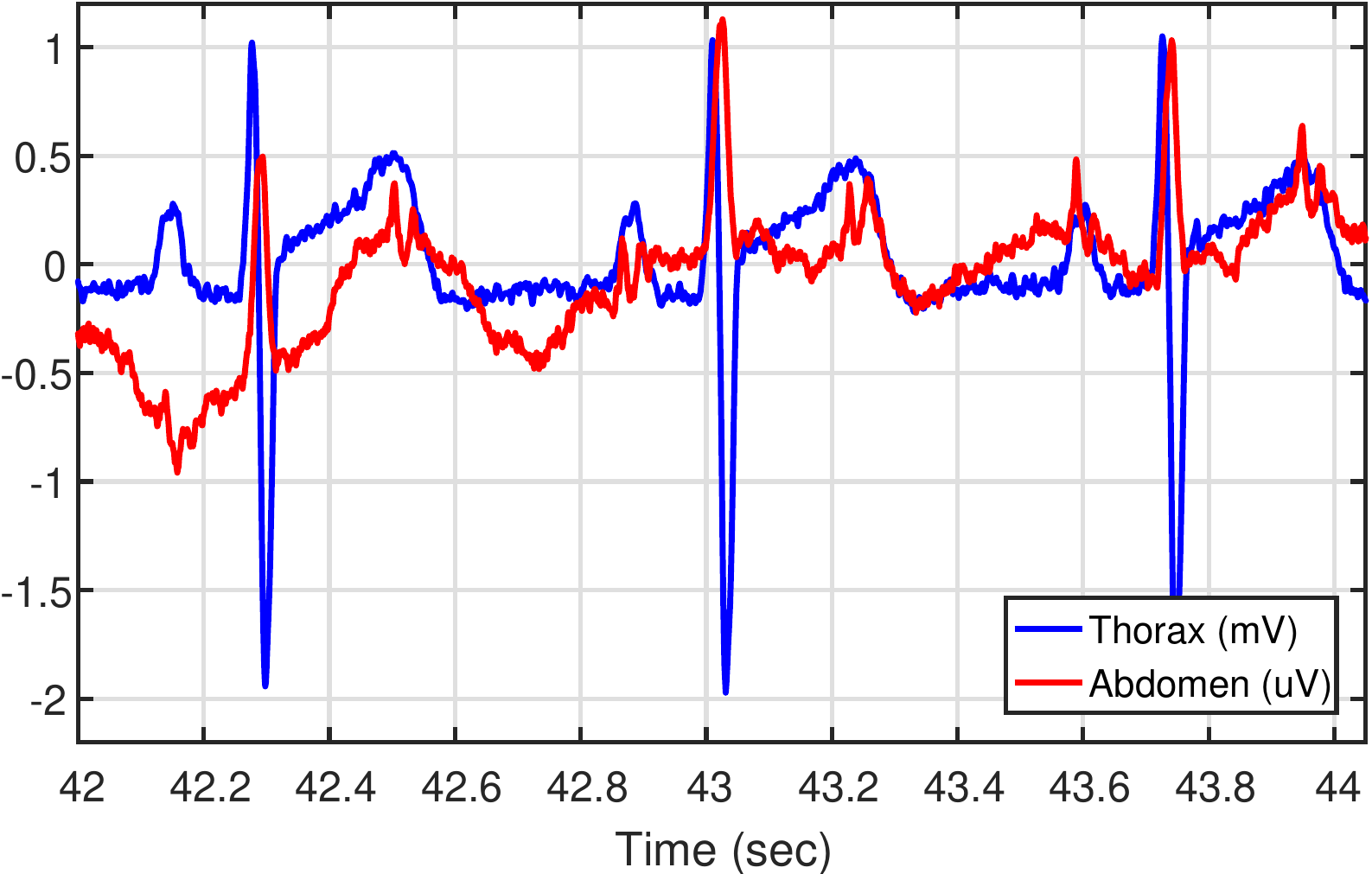}
	\caption{A short interval of the abdomen and thorax signals. The large spikes correspond the QRS complexes of the maternal \ac{ECG}, and the small spikes in the abdomen signal correspond to the QRS complexes of the fetal ECG.}
	\label{FECG_signal}
\end{figure}

\begin{figure}[t]
	\centering
	\begin{subfigure}[h]{0.5\textwidth}
		\centering
		\includegraphics[width=1\textwidth]{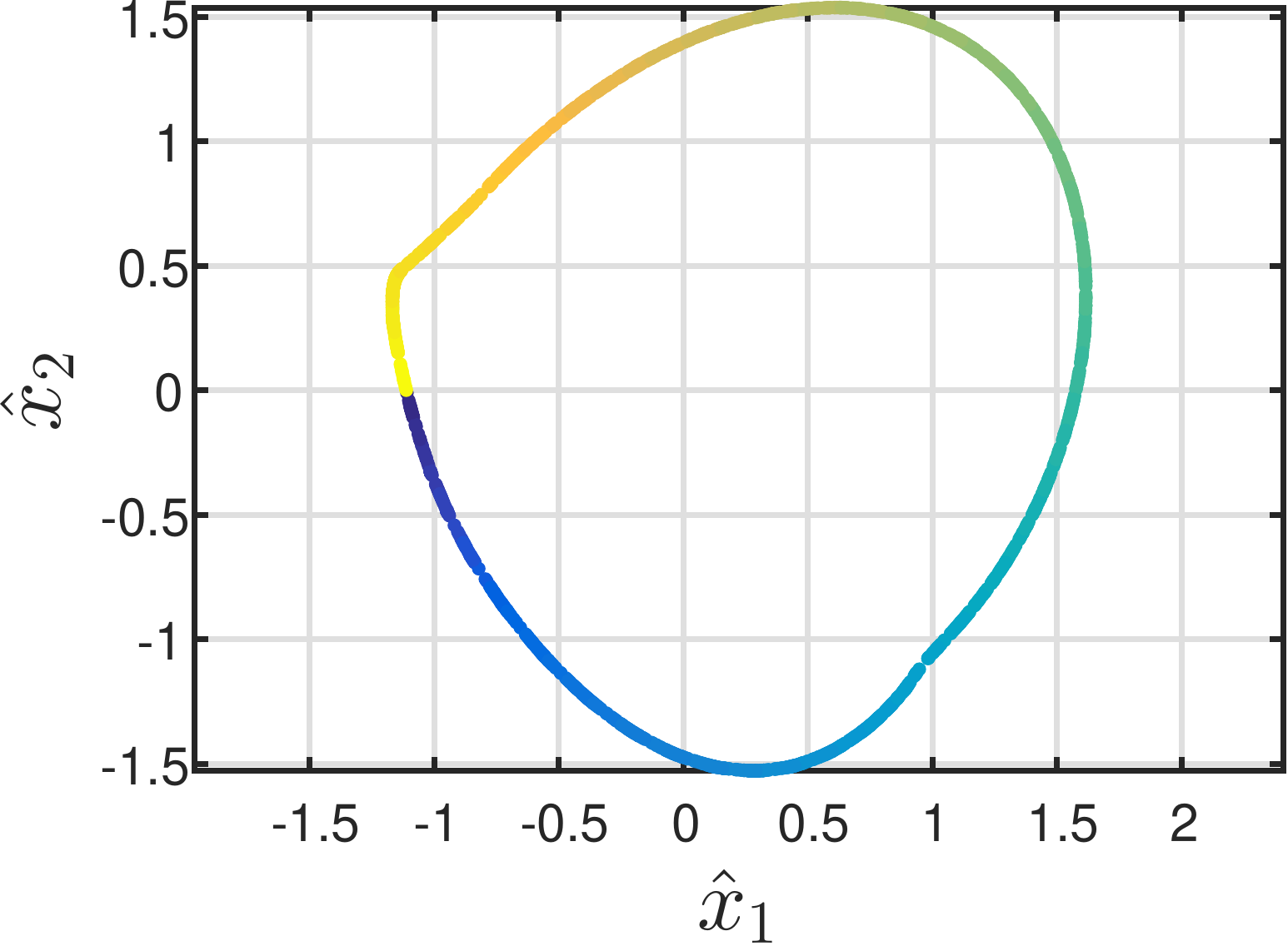}
	\end{subfigure}
	\caption{The parametrization obtained by taking the first two components from the AD algorithm applied to the abdomen and thorax signals.}
	\label{FECG_common}
	\centering
	\begin{subfigure}[h]{0.85\textwidth}
		\centering
		\includegraphics[width=1\textwidth]{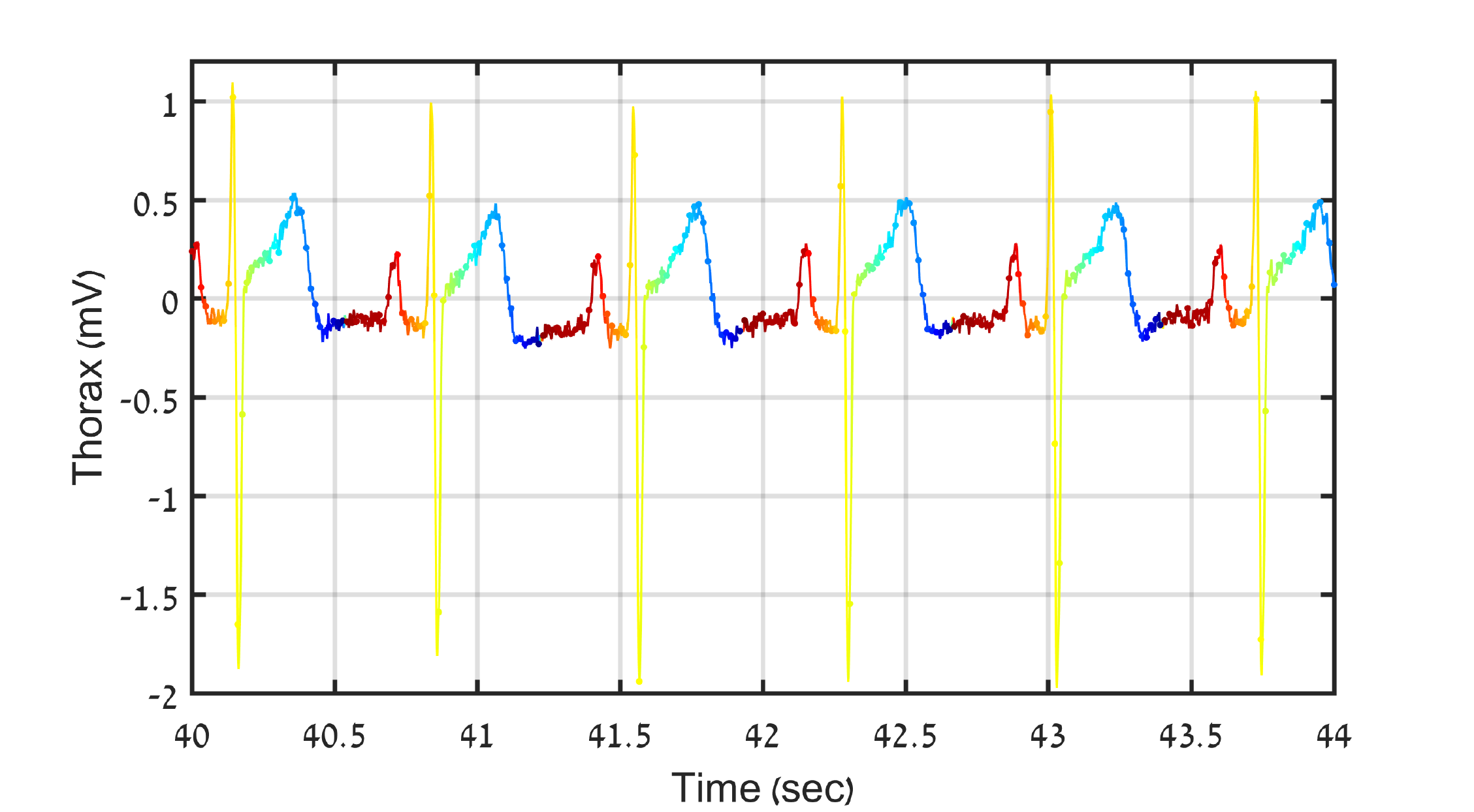}
	\end{subfigure}
	\caption{The thorax signal colored according to the common variable.}
	\label{FECG_common2}
\end{figure}

In our experiments we use the ``Non-invasive Fetal ECG Database'' from PhysioNet \cite{PhysioNet}, and apply our algorithm to the raw data without any preprocessing. In Figure \ref{FECG_signal} we present a short $2$ second interval from both the thorax and the abdomen signals. The length of the entire signals is $5$ minutes and $20$ seconds and in the following experiments we operate on a sample extracted from them that is of length $32.76$ seconds. The sampling rate of the \ac{ECG} signals is $1$~kHz.

Given these two signals, we first apply the \ac{AD} algorithm to extract the common variable and present the obtained parametrization in Figure \ref{FECG_common}. The algorithm is applied to time segments of length $256$ samples (lag map) with $16$ samples overlap, which are extracted from both the abdomen and thorax signals\footnotemark
\footnotetext{The mean of every segment was subtracted.}. 
Since the sampling rate is $1$ kHz, each segment is of duration $256$ miliseconds.
In the context of this paper, these segments are viewed as the sensor samples, and thus, are denoted by $\s_i^{(1)}$ and $\s_i^{(2)}$. 
Each $2$D point in the scatter plot in Figure \ref{FECG_common}, representing a pair of segments, is colored according to the angle created between the axis origin and the point itself (we choose this method of coloring due to the parametrization resembling a circle stemming from the signals periodicity). 
To emphasize the validity of our assumption that the common variable is indeed related to the maternal \ac{ECG}, we present in Figure \ref{FECG_common2} the thorax signal and color its samples according to the extracted parametrization of the common variable. Clearly, the common variable coincides with the cardiac cycle of the mother.

Next, we proceed by extracting the sensor-specific variable from the abdomen signal using our proposed algorithm. In this experiment we set the size of the neighborhoods in the common variable domain to be $q=21$. In Figure \ref{FECG_specific}, we present the abdomen signal colored according to the obtained parametrization of the sensor-specific variable. The results imply that indeed the sensor-specific variable is related to the \ac{ECG} of the fetal. Importantly, at $t=70.5$ sec, $t=77.8$ sec, and $t=88.4$, we observe that the fetal's heart beat is detected, even in pathological cases where it is completely ``buried'' in the maternal heart beat.

To demonstrate the generality of our method, we repeated the experiment and applied our algorithm to a signal measured from another patient. The results of this experiment are presented in Figure \ref{FECG_specific2}, showing that the parametrization of the sensor-specific variable manages to capture the ECG of the fetus in this case as well. Similarly, at times $t=69.9$ sec, $t=78.5$ sec, and $t=92.7$ sec the algorithm manages to capture the fetal ECG despite the significant occlusion by the maternal heart beat. This result also demonstrates cases in which the identification fails. For example, at times $t=78.2$ sec and $t=93.4$ sec a (possibly) redundant fetal heart peak is detected. One should note however that due to the lack of a ground truth, we can not be certain whether this is indeed a mis-identification. It might be the case that these are anomalies in the fetal heart rate and that the prediction is in fact correct.

To better support our results, we further apply our algorithm to two additional patients and depict the results in Figure \ref{FECG_specific3}. Here as well, it is observed that our algorithm manages to detect all the spikes corresponding to the QRS complexes of the fetus, exhibiting robustness to abnormal and uncharacteristic peaks, such as the one at time $t=43.8$ sec.

\begin{figure}[t]
	\centering
	\begin{subfigure}[t]{1\textwidth}
		\includegraphics[width=1\textwidth]{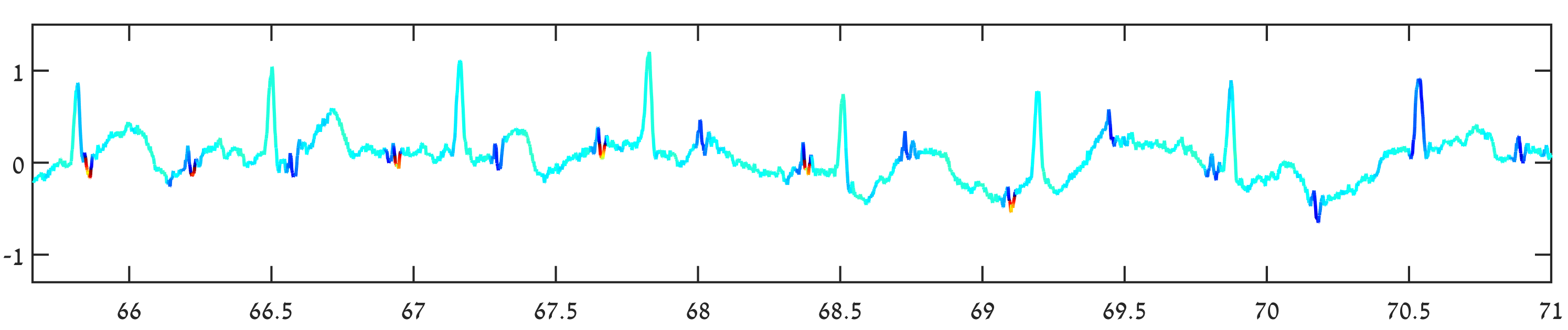}
	\end{subfigure}
	\qquad
	\centering
	\begin{subfigure}[t]{1\textwidth}
		\includegraphics[width=1\textwidth]{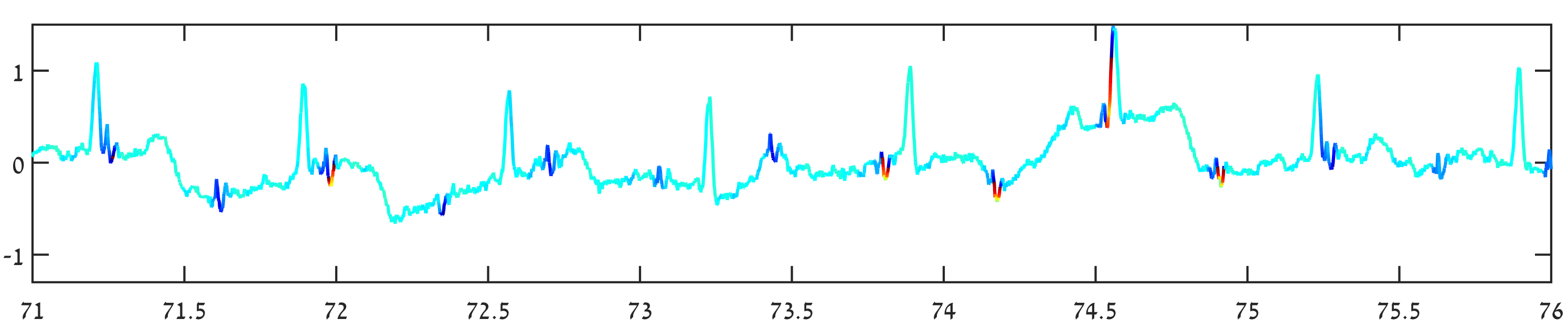}
	\end{subfigure}
	\qquad
	\centering
	\begin{subfigure}[t]{1.04\textwidth}
		\hspace{-0.58cm}
		\includegraphics[width=1\textwidth]{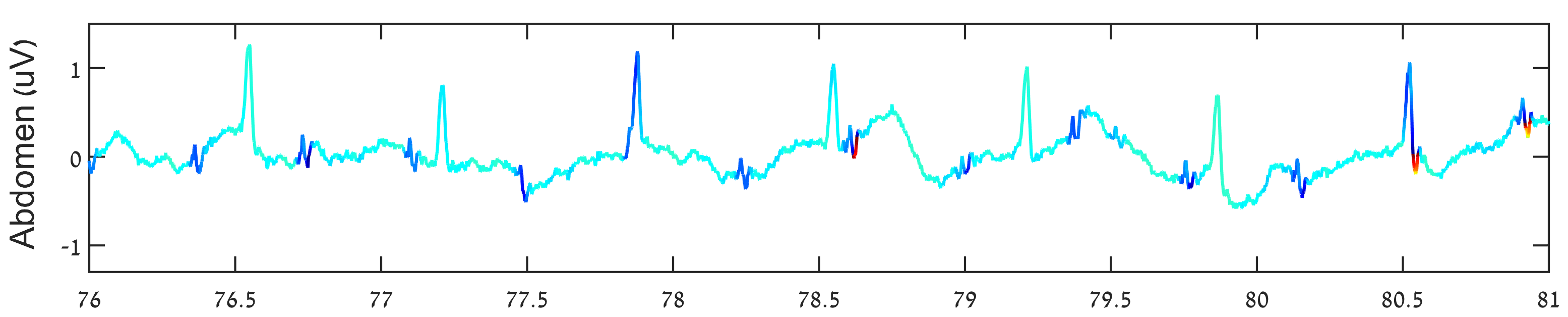}
	\end{subfigure}
	\qquad
	\centering
	\begin{subfigure}[t]{1\textwidth}
		\includegraphics[width=1\textwidth]{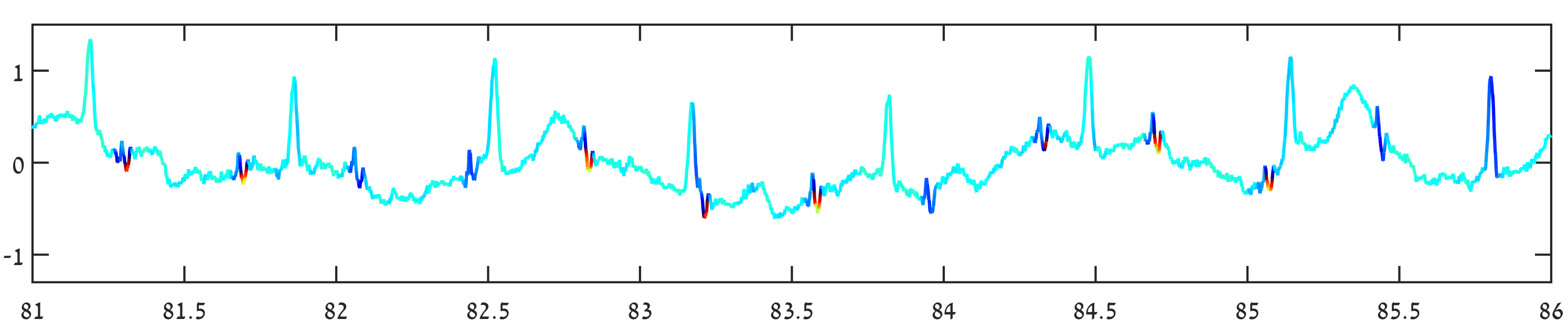}
	\end{subfigure}
	\qquad
	\centering
	\begin{subfigure}[t]{1\textwidth}
		\includegraphics[width=1\textwidth]{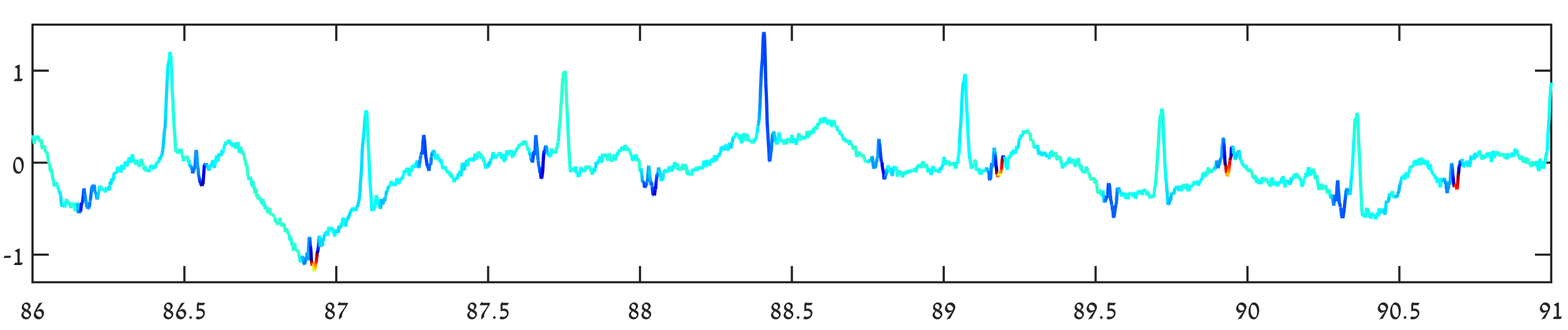}
	\end{subfigure}
	\qquad
	\centering
	\begin{subfigure}[t]{1\textwidth}
		\includegraphics[width=1\textwidth]{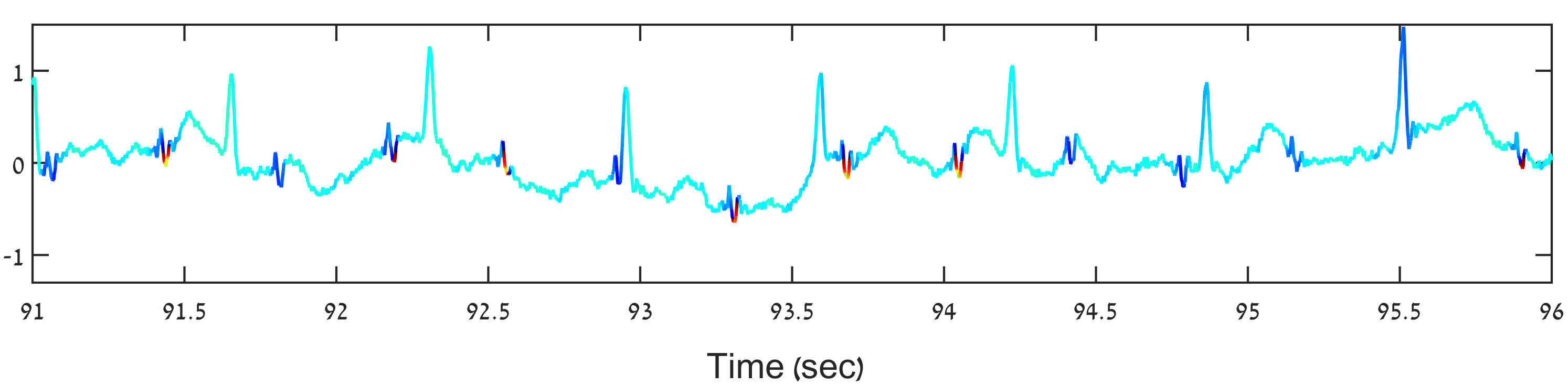}
	\end{subfigure}
	\qquad
	\vspace{-0.6cm}
	\caption{The abdomen signal as a function of time (in seconds). The signal is colored according to the extracted parametrization of the sensor-specific variable obtained by the proposed algorithm. Notice that the mother's peak at time $t=70.5$ sec, $t=77.8$ sec, and $t=88.4$ sec (as well as many other instances) completely hide the fetal's QRS. Nevertheless, our algorithm manages to detect it.}
	\label{FECG_specific}
\end{figure}

\begin{figure}[t]
	\centering
	\begin{subfigure}[t]{1\textwidth}
		\includegraphics[width=1\textwidth]{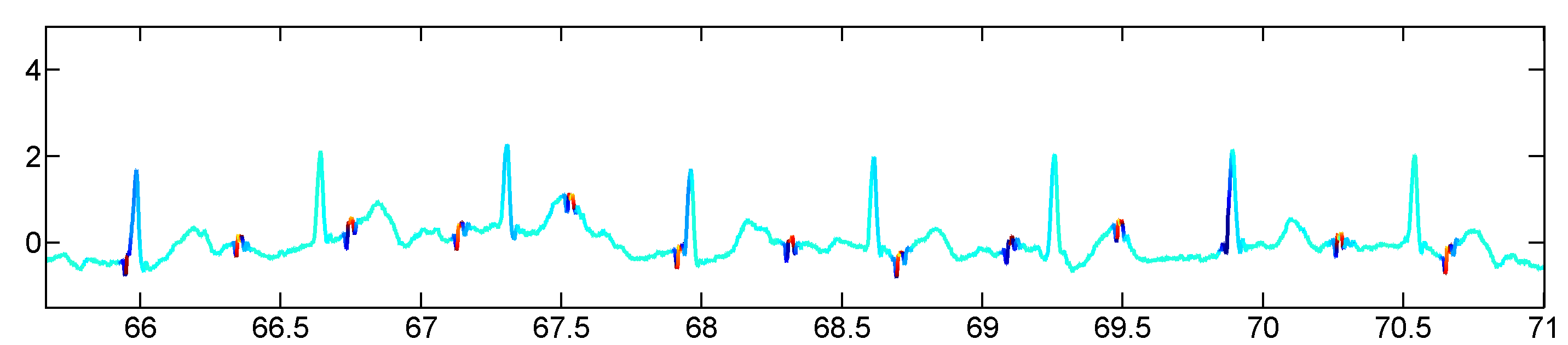}
	\end{subfigure}
	\qquad
	\centering
	\begin{subfigure}[t]{1\textwidth}
		\includegraphics[width=1\textwidth]{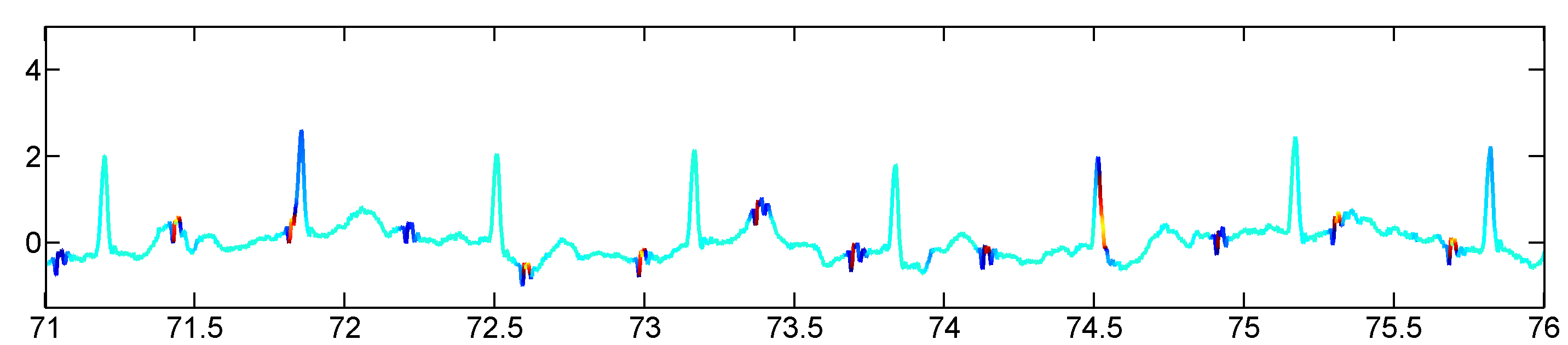}
	\end{subfigure}
	\qquad
	\centering
	\begin{subfigure}[t]{1\textwidth}
		\hspace{-0.58cm}
		\includegraphics[width=1.04\textwidth]{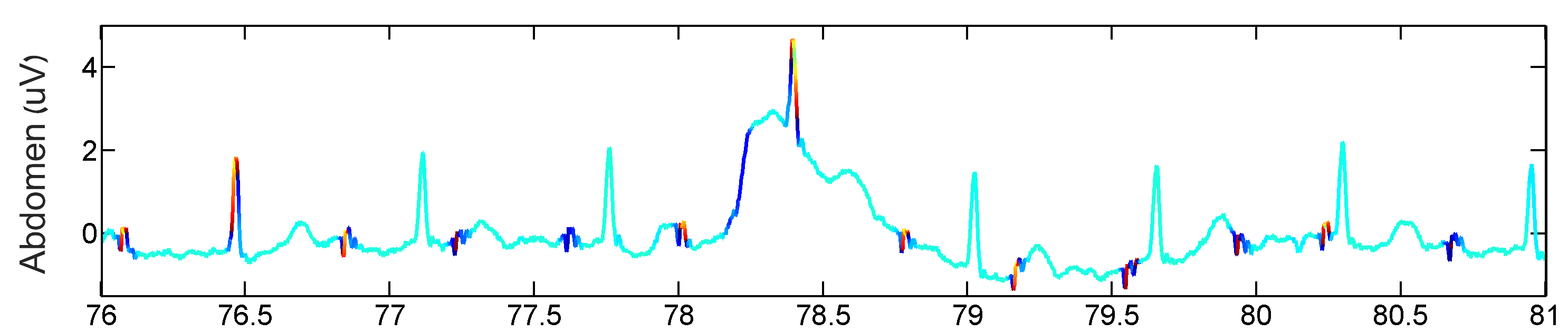}
	\end{subfigure}
	\qquad
	\centering
	\begin{subfigure}[t]{1\textwidth}
		\includegraphics[width=1\textwidth]{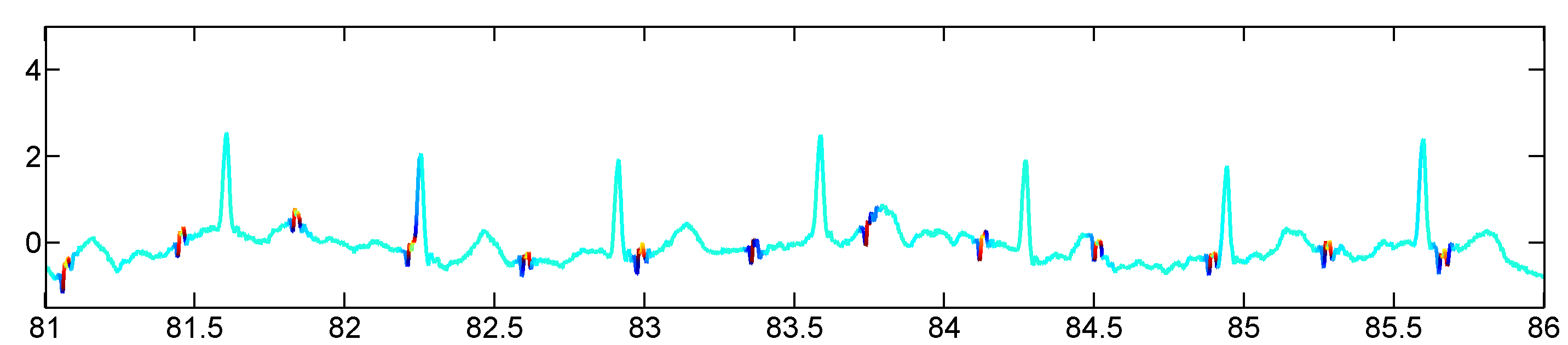}
	\end{subfigure}
	\qquad
	\centering
	\begin{subfigure}[t]{1\textwidth}
		\includegraphics[width=1\textwidth]{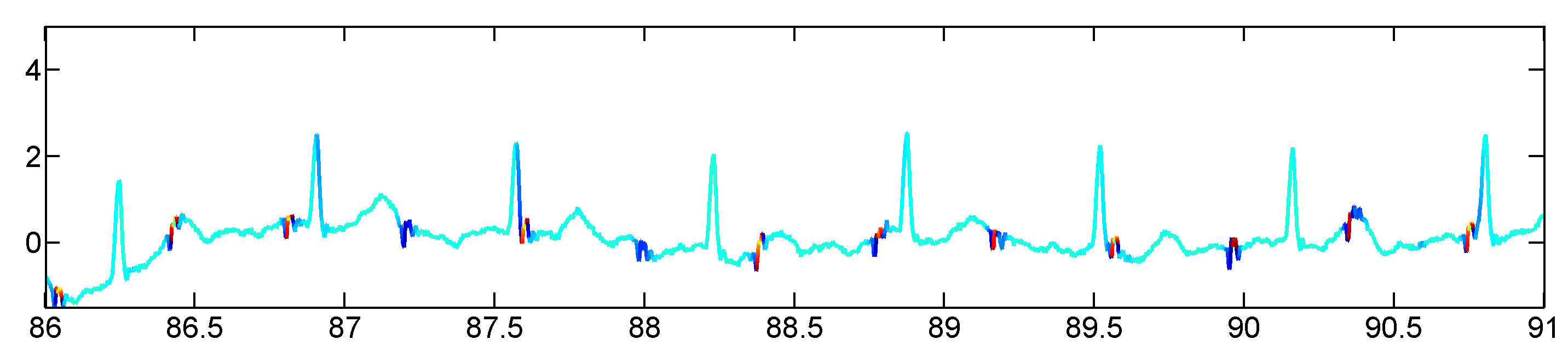}
	\end{subfigure}
	\qquad
	\centering
	\begin{subfigure}[t]{1\textwidth}
		\includegraphics[width=1\textwidth]{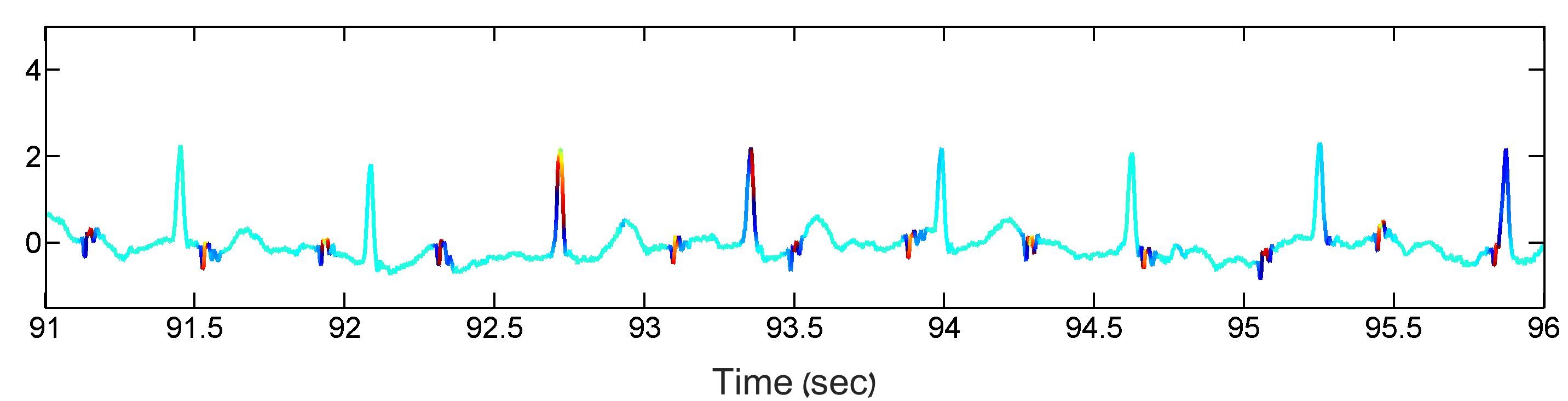}
	\end{subfigure}
	\qquad
	\vspace{-0.6cm}
	\caption{The abdomen signal of a second patient as a function of time (in seconds). Similarly to Figure \ref{FECG_specific}, the signal is colored according to the extracted parametrization of the sensor-specific variable obtained by the proposed algorithm. At times $t=69.9$ sec, $t=78.5$ sec, and $t=92.7$ sec the maternal peaks completely hide the fetal \ac{ECG}, yet, our algorithm enables to capture it. Moreover, at times $t=78.2$ sec and $t=93.4$ sec, we observe that a (possibly) redundant fetal peak is detected.}
	\label{FECG_specific2}
\end{figure}

\begin{figure}[t]
	\centering
	\begin{subfigure}[t]{1\textwidth}
		\includegraphics[width=1\textwidth]{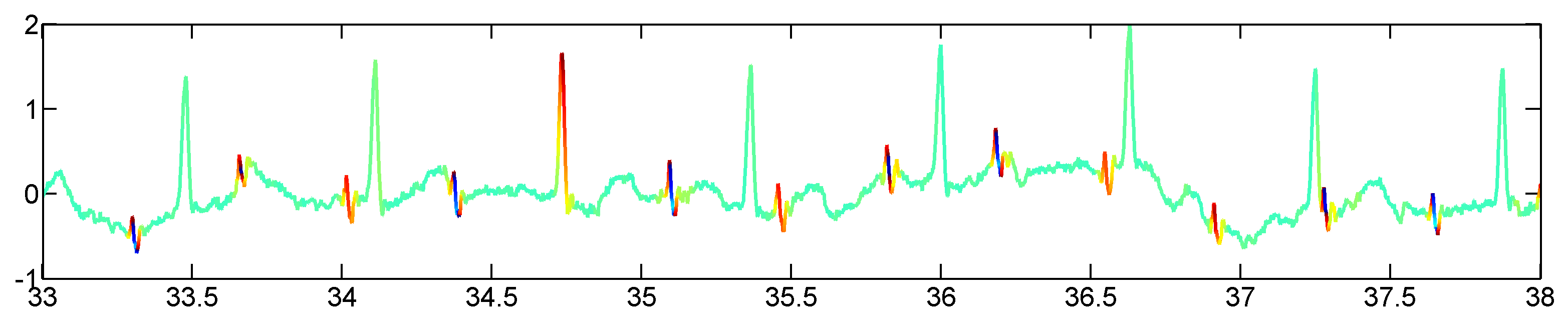}
	\end{subfigure}
	\qquad
	\centering
	\begin{subfigure}[t]{1\textwidth}
		\includegraphics[width=1\textwidth]{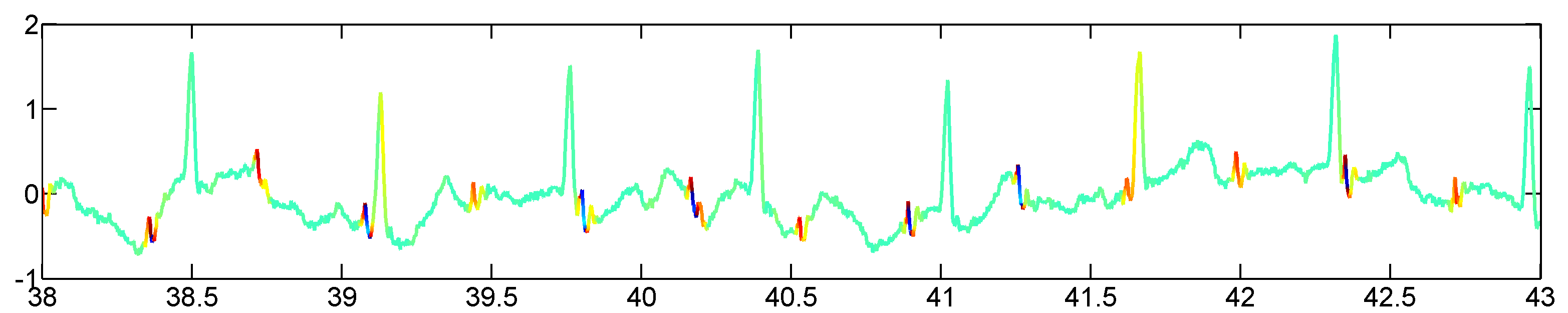}
	\end{subfigure}
	\qquad
	\centering
	\begin{subfigure}[t]{1.03\textwidth}
		\hspace{-0.57cm}
		\includegraphics[width=1\textwidth]{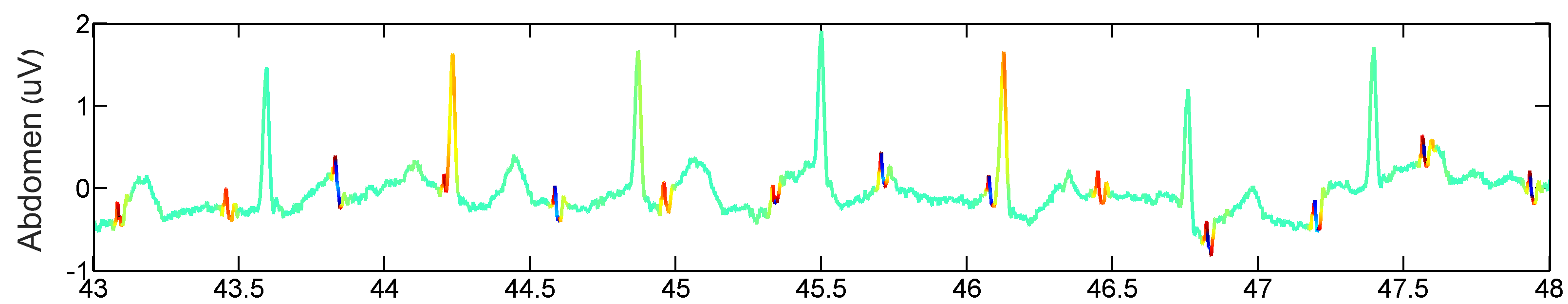}
	\end{subfigure}
	\qquad
	\centering
	\begin{subfigure}[t]{1\textwidth}
		\includegraphics[width=1\textwidth]{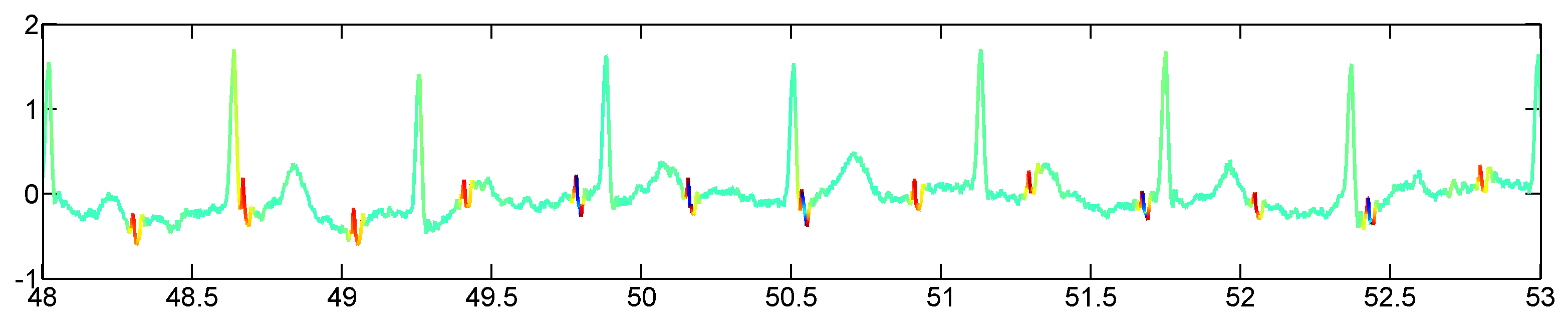}
	\end{subfigure}
	\qquad
	\centering
	\begin{subfigure}[t]{1\textwidth}
		\includegraphics[width=1\textwidth]{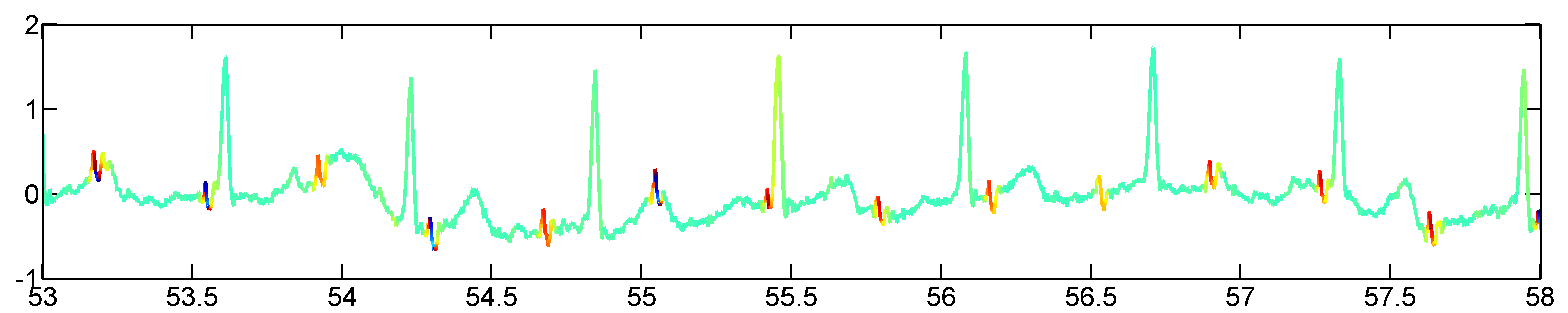}
	\end{subfigure}
	\qquad
	\centering
	\begin{subfigure}[t]{1\textwidth}
		\includegraphics[width=1\textwidth]{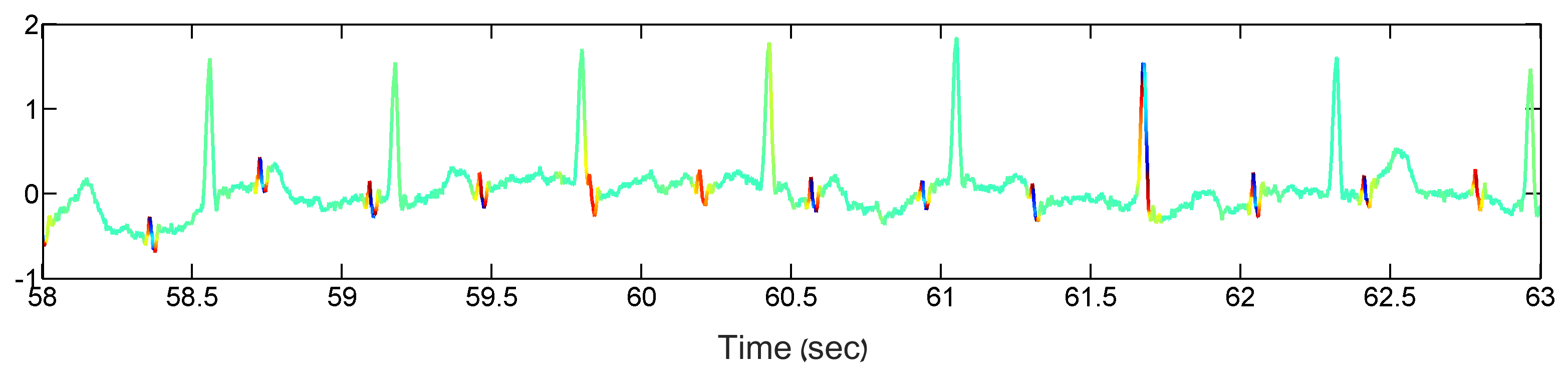}
	\end{subfigure}
	\qquad
	\vspace{-0.6cm}
	\caption{The abdomen signal of a third patient as a function of time (in seconds). The signal is colored similarly as in Figure \ref{FECG_specific} and Figure \ref{FECG_specific2}. Notice the uncharacteristic peak at time $t=43.8$ sec that is captured by our algorithm.}
	\label{FECG_specific3}
\end{figure}

We note that in order to establish baseline results, in addition to using the proposed algorithm, we have attempted to use simpler methods such ICA \cite{hyvarinen1999fast,hyvarinen2000independent} in order to separate the maternal \ac{ECG} from the fetal \ac{ECG}. Broadly, these methods attempt to transform the data at hand (after some whitening) into components that are as statistically independent from each other as possible. However, these methods did not obtain satisfactory results in separating the fetal and maternal \ac{ECG} signals, since some nontrivial preprocessing (in addition to whitening and dimensionality reduction) must be employed in order to facilitate their employment. Conversely, we emphasize that our proposed method does not rely on any preprocessing of the data (except for the segment mean subtraction).

\section{Conclusions}
\label{Sec:Conclusions}
Given a set of measurements, originating from several sensors, the \ac{AD} algorithm extracts a parametrization of a variable common to all sources. In this work, leveraging on \ac{AD}, we proposed a method which further analyzes the signals by extracting the sensor-specific variables. We provided a theoretical justification as well as various applications. A shortcoming of our method is the need to extract the intermediate common variable parametrization. Proposing a method that could skip this stage is a promising future direction.

\clearpage

\section*{References}

\bibliography{mybibfile}

\acrodef{PDF}{probability density function}
\acrodef{AD}{alternating diffusion}
\acrodef{ECG}{Electrocardiography}

\end{document}